\definecolor{citecolor}{HTML}{0071BC}
\definecolor{linkcolor}{HTML}{ED1C24}
\definecolor{mydarkblue}{rgb}{0,0.08,0.45}
\title{
\huge RelationMatch: Matching In-batch Relationships for Semi-supervised Learning
}
\author{Yifan Zhang\thanks{Equal contribution} \thanks{
IIIS, Tsinghua University, e-mail: {\tt zhangyif21@mails.tsinghua.edu.cn}
} 
	~~
Jingqin Yang\footnotemark[1] \thanks{
IIIS, Tsinghua University, e-mail: {\tt yangjq21@mails.tsinghua.edu.cn}
} 
	~~
Zhiquan Tan\footnotemark[1] \thanks{
Department of Math, Tsinghua University, e-mail: {\tt tanzq21@mails.tsinghua.edu.cn}
} 
	~~
Yang Yuan\thanks{
IIIS, Tsinghua University,
e-mail: {\tt yuanyang@tsinghua.edu.cnu}}
}
\begin{document}

\date{}
\maketitle


\begin{abstract}
Semi-supervised learning has emerged as a pivotal approach for leveraging scarce labeled data alongside abundant unlabeled data. Despite significant progress, prevailing SSL methods predominantly enforce consistency between different augmented views of individual samples, thereby overlooking the rich relational structure inherent within a mini-batch. In this paper, we present \textbf{RelationMatch}, a novel SSL framework that explicitly enforces in-batch relational consistency through a Matrix Cross-Entropy (MCE) loss function. The proposed MCE loss is rigorously derived from both matrix analysis and information geometry perspectives, ensuring theoretical soundness and practical efficacy. Extensive empirical evaluations on standard benchmarks, including a notable 15.21\% accuracy improvement over FlexMatch on STL-10, demonstrate that RelationMatch not only advances state-of-the-art performance but also provides a principled foundation for incorporating relational cues in SSL.
\end{abstract}

\section{Introduction}

Semi-supervised learning (SSL) occupies a unique niche at the intersection of supervised and self-supervised learning paradigms~\citep{tian2020contrastive, chen2020simple}. By leveraging a limited set of labeled examples alongside a substantially larger corpus of unlabeled data, SSL methods combine direct label fitting with label propagation based on data manifold priors. This dual strategy has enabled SSL algorithms to achieve remarkable performance improvements over purely supervised approaches, especially in label-scarce regimes~\citep{sohn2020fixmatch, zhang2021flexmatch, wang2022freematch}.

A prevailing approach in SSL is the generation of pseudo-labels~\citep{lee2013pseudo, tschannen2019mutual, berthelot2019mixmatch, xie2020unsupervised, sohn2020fixmatch, gong2021alphamatch} by a neural network $\boldsymbol{f}$ during training. This idea, rooted in early self-training methodologies~\citep{yarowsky1995unsupervised}, involves sampling both labeled and unlabeled examples in each iteration. For unlabeled data, a weak augmentation is first applied, and the network’s prediction is evaluated; when a high confidence is attained, the prediction is adopted as a pseudo-label. The network is then trained to reproduce this label on a strongly augmented version of the same input.

While existing methods enforce consistency between the predictions for weakly and strongly augmented versions of each individual image, one may ask: Is pairwise augmentation consistency the only supervisory signal available in SSL? In practice, the relationships among different examples in a mini-batch also contain valuable information that remains largely untapped.

\begin{figure}[t] 
\centering 
\includegraphics[width=0.85\columnwidth]{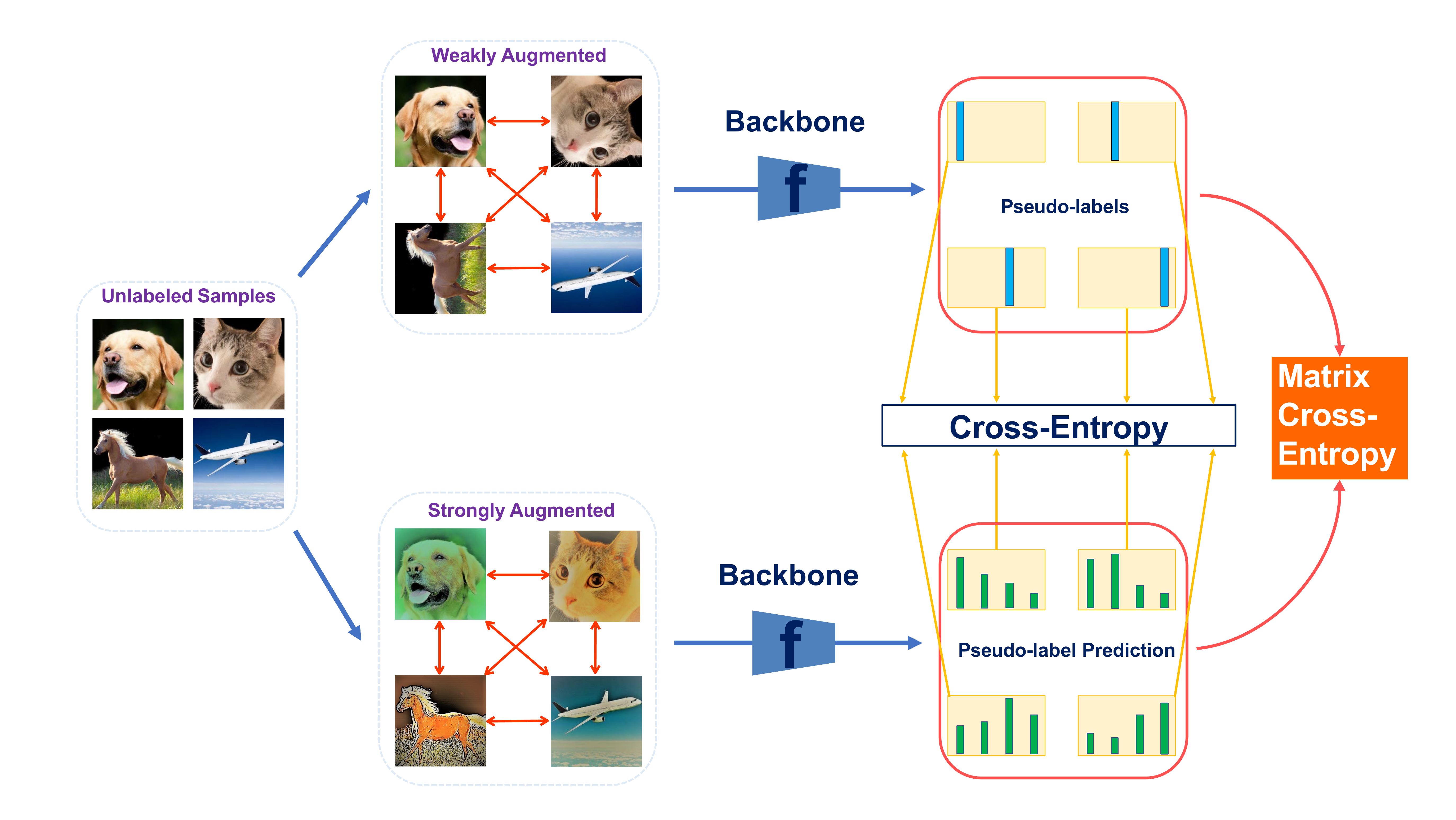}
\caption{Pseudo-labels are generated by inputting a batch of weakly augmented images into the model. Subsequently, the model computes the probability distributions for the corresponding strongly augmented images. The loss function seamlessly integrates both the traditional cross-entropy loss and the proposed matrix cross-entropy loss.}
\label{fig:arch}
\end{figure}

In this work, we propose to augment the conventional SSL framework by additionally enforcing consistency between the in-batch relationships of weak and strong augmentations. As illustrated in Figure~\ref{fig:arch}, consider a mini-batch where the upper row comprises weakly augmented images with accurate pseudo-labels, while the lower row contains their strongly augmented counterparts. Whereas existing methods only require that each strongly augmented image’s prediction closely approximates its one-hot pseudo-label, our method—termed \textbf{RelationMatch}—also requires that the relational structure among the weakly augmented images is preserved in the strongly augmented set. In other words, the relationship between a weakly augmented dog and a weakly augmented cat should be similar to that between their strongly augmented versions:
\[
\mathrm{Relation}(\text{WeaklyAug dog}, \text{WeaklyAug cat})
\approx 
\mathrm{Relation}(\text{StronglyAug dog}, \text{StronglyAug cat}).
\]
Formally, each image $\mathbf{x}$ is represented by a prediction vector $\boldsymbol{f}(\mathbf{x}) \in \mathbb{R}^k$, and the similarity between any two images is quantified via the inner product of their prediction vectors. For a mini-batch of $b$ images, the prediction matrix $\mathbf{A} \in \mathbb{R}^{b \times k}$ yields a relationship matrix defined as
\[
\operatorname{R}(\mathbf{A}) \triangleq \mathbf{A}\mathbf{A}^\top.
\]
Distinct relationship matrices are computed for weak and strong augmentations, denoted as $\operatorname{R}(\mathbf{A}^w)$ and $\operatorname{R}(\mathbf{A}^s)$, respectively.

To align these relationship matrices, we introduce the \emph{Matrix Cross-Entropy} (MCE) loss, which generalizes the standard cross-entropy loss to the matrix domain. Deriving naturally from both matrix analysis and information geometry, the MCE loss exhibits several desirable properties, including convexity and a well-defined minimization behavior. Empirically, integrating MCE within RelationMatch yields significant performance gains—for instance, achieving a 15.21\% accuracy improvement over FlexMatch on the STL-10 dataset—and also provides consistent benefits on CIFAR-10 and CIFAR-100. Moreover, our method extends gracefully to fully supervised settings.

Our contributions can be summarized as follows:
\vspace{-0.2cm}
\begin{itemize}[leftmargin=0.7cm, parsep=-0.0cm]
    \item We propose \textbf{RelationMatch}, a novel SSL framework that enforces in-batch relational consistency between weak and strong augmentations.
    \item We develop the MCE loss function, underpinned by dual theoretical perspectives from matrix analysis and information geometry, and rigorously establish its favorable properties.
    \item We conduct extensive empirical evaluations on standard vision benchmarks (CIFAR-10, CIFAR-100, and STL-10), demonstrating that RelationMatch consistently outperforms state-of-the-art methods and improves performance in both semi-supervised and supervised settings.
\end{itemize}

\section{Matrix Cross-Entropy for Supervised and Semi-supervised Learning}

\subsection{Warm-up Example}
To elucidate how our algorithm captures relationships through both weak and strong augmentations, let's begin with a straightforward example. Suppose we have $b=4, k=3$, where three out of the four images belong to the first class, and the fourth image belongs to the last class. We assume that the function $f$ assigns accurate pseudo-labels for the weak augmentations, denoting $\operatorname{R}(\mathbf{A}^w) = \mathbf{A}^w (\mathbf{A}^w)^{\top}$ as:
\begin{equation}
\operatorname{R}(\mathbf{A}^w) = \operatorname{R}\left(
\begin{bmatrix}
1 & 0 & 0\\
0 & 0 & 1\\
1 & 0 & 0\\
1 & 0 & 0
\end{bmatrix}
\right)
=
\begin{bmatrix}
1 & 0 & 1 & 1\\
0 & 1 & 0 & 0\\
1 & 0 & 1 & 1\\
1 & 0 & 1 & 1
\end{bmatrix}
\tag{2.1}
\label{eqn:one-hot-1}
\end{equation}

Since the pseudo labels for weak augmentations are always one-hot vectors, 
$\operatorname{R}(\mathbf{A}^w)$ is well structured. Specifically, 
for rows that are the same in $\mathbf{A}^w$, they are also the same in $\operatorname{R}(\mathbf{A}^w)$, with values representing the corresponding row indices. In other words, $\operatorname{R}(\mathbf{A}^w)$ represents $k$ distinct clusters of one-hot vectors in the mini-batch. 

If $f$ can generate exactly the same prediction matrix $\mathbf{A}^s$ for the strongly augmented images, our algorithm will not incur any additional loss compared with the previous cross-entropy based algorithms. However, $\mathbf{A}^s$ and $\mathbf{A}^w$ are generally different, where our algorithm becomes useful. For example, given a pair of prediction vectors $(p, q)$, 
if we know $p=(1, 0, 0)$, then cross-entropy loss is simply $p_1 \log q_1 =\log q_1$. Therefore, we will get the same loss for $q=(0.5, 0.5, 0)$,  $q=(0.5, 0.25, 0.25)$, or $q=(0.5, 0, 0.5)$. Consider the following two possible cases of $\operatorname{R}(\mathbf{A}^s)$ generated by $f$ during training:
\[
\operatorname{R}\left(
\begin{bmatrix}
0.5 & 0.5 & 0\\
0 & 0 & 1\\
0.5 & 0.5 & 0\\
0.5 & 0.5 & 0
\end{bmatrix}
\right)
=
\begin{bmatrix}
0.5 & 0 & 0.5 & 0.5\\
0 & 1 & 0 & 0\\
0.5 & 0 & 0.5 & 0.5\\
0.5 & 0 & 0.5 & 0.5
\end{bmatrix}
\]

\[
\operatorname{R}\left(
\begin{bmatrix}
0.5 & 0.5 & 0\\
0 & 0 & 1\\
0.5 & 0.25 & 0.25\\
0.5 & 0 & 0.5
\end{bmatrix}
\right)
=\begin{bmatrix}
0.5 & 0 & 0.375 & 0.25\\
0 & 1 & 0.25 & 0.5\\
0.375 & 0.25 & 0.375 & 0.375\\
0.25 & 0.5 & 0.375 & 0.5
\end{bmatrix}
\]

If we only use cross-entropy loss, these two cases will give us the same gradient information. However, by considering the in-batch relationships, it becomes clear that these two cases are different: the first case always makes mistakes on the second class, while the second case makes relatively random mistakes. Therefore, by comparing $\operatorname{R}(\mathbf{A}^s)$ with $\operatorname{R}(\mathbf{A}^w)$ defined in Eqn.~(\ref{eqn:one-hot-1}), we can get additional training signals. In our example, the first case will not give additional gradient information for the second row (and the second column due to symmetry), but the second case will. 

\subsection{Matrix Cross-Entropy}
We now introduce the Matrix Cross-Entropy (MCE) loss, which measures the discrepancy between two positive semi-definite relationship matrices, namely $\operatorname{R}(\mathbf{A}^w)$ and $\operatorname{R}(\mathbf{A}^s)$.

\begin{definition}[Matrix Cross-Entropy for Positive Semi-Definite Matrices]
\label{def:mce}
Let $\mathbf{P}$ and $\mathbf{Q}$ be positive semi-definite matrices. The \emph{Matrix Cross-Entropy} (MCE) between $\mathbf{P}$ and $\mathbf{Q}$ is defined as
\begin{equation}
\operatorname{MCE}(\mathbf{P}, \mathbf{Q}) = \operatorname{tr}\Bigl(-\mathbf{P} \log \mathbf{Q} + \mathbf{Q}\Bigr),
\end{equation}
where $\operatorname{tr}(\cdot)$ denotes the trace operator and $\log$ refers to the principal matrix logarithm (see Appendix~\ref{sec:discuss-density-matrix}).
\end{definition}

In the case of $l_2$-normalized representation vectors, the relationship matrices are further normalized. Define
\[
\mathbf{P} = \frac{1}{b}\mathbf{Y}\mathbf{Y}^{\top} \quad \text{and} \quad \mathbf{Q} = \frac{1}{b}\mathbf{X}\mathbf{X}^{\top},
\]
where each row of $\mathbf{Y}$ and $\mathbf{X}$ is $l_2$-normalized. Since the diagonal entries of $\mathbf{X}\mathbf{X}^{\top}$ are all 1, we have $\operatorname{tr}(\mathbf{X}\mathbf{X}^{\top}) = b$, and the MCE simplifies to
\begin{equation}
\operatorname{MCE}(\mathbf{P}, \mathbf{Q}) = \operatorname{tr}\Bigl(-\mathbf{P} \log \mathbf{Q}\Bigr) + 1.
\end{equation}
The derivation follows directly from substituting the normalization condition into the definition.

Although the MCE loss may initially appear complex, its formulation is justified by rigorous derivations from both matrix analysis (Section~\ref{sec:matrix-entropy}) and information geometry (Section~\ref{sec:information-geometry}). An additional interpretation based on the eigen-decomposition of matrices is provided in Appendix~\ref{sec:pca-interpretation}. Moreover, as discussed in Section~\ref{sec:discuss-mce}, the standard cross-entropy loss is revealed as a special case of MCE, further underscoring its theoretical elegance and practical utility.

\section{Matrix Cross-Entropy: Foundations and Interpretations}
\label{sec:mce}
In this section, we rigorously develop the theoretical foundations for the Matrix Cross-Entropy (MCE) loss. We examine MCE from two complementary perspectives: matrix analysis and information geometry. Additional interpretations based on eigen-decomposition are deferred to Appendix~\ref{sec:pca-interpretation}.

\subsection{Density Matrices and the Matrix Logarithm}
\label{sec:matrix-entropy}

\begin{definition}[Density Matrix on \( \mathbb{R}^{n \times n} \)]
A matrix \( \mathbf{A} \in \mathbb{R}^{n \times n} \) is called a \emph{density matrix} if it is symmetric, positive semi-definite, and has unit trace.
\end{definition}

Density matrices generalize the concept of probability distributions to matrix spaces. Their non-negative eigenvalues and unit trace naturally mirror the non-negativity and normalization constraints found in classical probability theory.

\begin{definition}[Matrix Logarithm]
\label{def:matrix-log}
The exponential of a matrix \( \mathbf{B} \) is defined by
\[
e^{\mathbf{B}} = \sum_{n=0}^{\infty} \frac{\mathbf{B}^n}{n!}.
\]
A matrix \( \mathbf{B} \) is called the \emph{matrix logarithm} of \( \mathbf{A} \) if 
\[
e^{\mathbf{B}} = \mathbf{A}.
\]
In practice, there may exist multiple logarithms for a given matrix \( \mathbf{A} \); however, the \emph{principal matrix logarithm}~\citep{higham2008functions} provides a canonical choice, especially when \( \mathbf{A} \) is a density matrix.
\end{definition}

\begin{theorem}[Principal Matrix Logarithm~\citep{higham2008functions}]
\label{thm:principal}
Let \( \mathbf{A} \in \mathbb{C}^{n \times n} \) be a matrix with no eigenvalues on \( \mathbb{R}^{-} \). Then there exists a unique matrix \( \mathbf{X} \) such that 
\[
e^{\mathbf{X}} = \mathbf{A},
\]
with all eigenvalues of \( \mathbf{X} \) lying in the strip \( \{ z \in \mathbb{C} : -\pi < \operatorname{Im}(z) < \pi \} \). We refer to \( \mathbf{X} \) as the principal logarithm of \( \mathbf{A} \) and denote it by \( \mathbf{X} = \log(\mathbf{A}) \). If \( \mathbf{A} \) is real, then its principal logarithm is also real.
\end{theorem}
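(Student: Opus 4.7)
The plan is to prove existence via an explicit construction on the Jordan canonical form, and uniqueness by combining the strip condition with the fact that any $\mathbf{Y}$ satisfying $e^{\mathbf{Y}} = \mathbf{A}$ must commute with $\mathbf{A}$ and therefore preserve its generalized eigenspaces.

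For existence, I write $\mathbf{A} = \mathbf{S}\mathbf{J}\mathbf{S}^{-1}$ in Jordan form. Since $\mathbf{A}$ has no eigenvalues on $\mathbb{R}^{-}$ (and in particular $0$ is excluded, so $\mathbf{A}$ is invertible), each Jordan block can be written as $J_k(\lambda) = \lambda(\mathbf{I} + \mathbf{N}/\lambda)$ where $\mathbf{N}$ is nilpotent of index at most $k$. Define
\[
\log J_k(\lambda) = (\log \lambda)\,\mathbf{I} + \sum_{j=1}^{k-1} \frac{(-1)^{j+1}}{j}\left(\frac{\mathbf{N}}{\lambda}\right)^{j},
\]
where $\log \lambda$ denotes the scalar principal logarithm (well-defined with $\operatorname{Im}(\log\lambda)\in(-\pi,\pi)$ precisely because $\lambda\notin\mathbb{R}^{-}$). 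Set $\mathbf{X} := \mathbf{S}\bigl(\bigoplus_{k} \log J_k(\lambda_k)\bigr)\mathbf{S}^{-1}$. To verify $e^{\mathbf{X}} = \mathbf{A}$ block-by-block, I use that $(\log\lambda)\mathbf{I}$ commutes with the nilpotent term, so $e^{\mathbf{X}}$ factors as $\lambda\,e^{\log(\mathbf{I}+\mathbf{N}/\lambda)}$, and the series identity $\exp\circ\log = \mathrm{id}$ holds on the nilpotent perturbation because both sides are polynomials in $\mathbf{N}/\lambda$ and agree formally as power series. The constructed spectrum is $\{\log\lambda_k\}$, which lies in the open strip by construction.

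For uniqueness, suppose $\mathbf{Y}$ also satisfies $e^{\mathbf{Y}} = \mathbf{A}$ with spectrum in the strip $\{-\pi<\operatorname{Im}(z)<\pi\}$. Because $\mathbf{Y}$ commutes with any polynomial in itself, it commutes with $e^{\mathbf{Y}} = \mathbf{A}$, hence leaves every generalized eigenspace of $\mathbf{A}$ invariant. On the generalized eigenspace for $\lambda$, every eigenvalue $\mu$ of $\mathbf{Y}$ must satisfy $e^{\mu} = \lambda$ and lie in the strip; since $\exp$ is injective on the open strip, $\mu = \log\lambda$. Writing $\mathbf{Y}|_{\mathrm{gen}_\lambda} = (\log\lambda)\mathbf{I} + \mathbf{M}$ with $\mathbf{M}$ nilpotent and commuting with the identity, $e^{\mathbf{Y}}|_{\mathrm{gen}_\lambda} = \lambda\, e^{\mathbf{M}} = \lambda(\mathbf{I}+\mathbf{N}/\lambda)$ forces $e^{\mathbf{M}} = \mathbf{I}+\mathbf{N}/\lambda$. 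Inverting this on the set of nilpotent matrices via the truncated log series (a bijection between $\{\text{nilpotent}\}$ and $\{\mathbf{I}+\text{nilpotent}\}$ whose formal inverse pair is $\exp/\log$) recovers $\mathbf{M}$ uniquely, matching my construction. The real case then follows cleanly: if $\mathbf{A}$ is real, then $\overline{\mathbf{X}}$ also satisfies $e^{\overline{\mathbf{X}}} = \overline{\mathbf{A}} = \mathbf{A}$ and its spectrum lies in the (conjugation-symmetric) strip, so uniqueness yields $\overline{\mathbf{X}} = \mathbf{X}$.

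The main obstacle I expect is the uniqueness step, specifically ensuring that the strip hypothesis truly pins down a single branch once one descends to a single generalized eigenspace. The crux is the bijectivity of $\exp$ between $(\log\lambda)\mathbf{I} + \{\text{nilpotents}\}$ and $\lambda\mathbf{I} + \{\text{nilpotents}\}$; everything else is bookkeeping around the Jordan decomposition. Once that invertibility on the nilpotent perturbation is in hand, the scalar injectivity of $\exp$ on the strip upgrades to matrix-level uniqueness without further work, and the real-case statement is essentially a corollary of uniqueness under complex conjugation.
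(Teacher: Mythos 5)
The paper states this as a cited result from Higham's \emph{Functions of Matrices} and provides no proof of its own, so there is no internal argument to compare against. Your Jordan-canonical-form construction is essentially the classical route to this theorem (and is close to how Higham develops it via primary matrix functions on the Jordan form), and I believe it is correct. The existence step is clean: the scalar principal log is well defined on each eigenvalue precisely because none lies on $\mathbb{R}^{-}$, and the truncated Mercator series on the nilpotent part of each Jordan block yields a genuine polynomial identity, so $\exp \circ \log = \mathrm{id}$ follows formally. Your uniqueness argument correctly isolates the two load-bearing facts: (i) any $\mathbf{Y}$ with $e^{\mathbf{Y}}=\mathbf{A}$ commutes with $\mathbf{A}$ and hence preserves its generalized eigenspaces, and (ii) on each generalized eigenspace the spectral constraint plus injectivity of the scalar $\exp$ on the open strip pins the semisimple part to $(\log\lambda)\mathbf{I}$, after which the problem reduces to the bijection $\exp:\{\text{nilpotent}\}\to\{\mathbf{I}+\text{nilpotent}\}$. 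You assert but do not quite spell out why that last map is a bijection; it would be worth one line noting that both $\exp$ and $\log$ restrict to polynomial maps on nilpotents and that the formal power-series identities $\exp(\log(\mathbf{I}+\mathbf{N}))=\mathbf{I}+\mathbf{N}$ and $\log(\exp(\mathbf{M}))=\mathbf{M}$ specialize to honest matrix identities because all terms are polynomials in a single matrix and hence commute. Also, when you say the nilpotent $\mathbf{M}$ on a generalized eigenspace ``matches my construction,'' note that your existence construction was block-by-block while the uniqueness argument works on the full generalized eigenspace; the reconciliation is that the direct sum of the block-wise logarithms is exactly the value of the polynomial $\log(\mathbf{I}+\mathbf{N}/\lambda)$ applied to the full nilpotent $\mathbf{N}$, which is harmless but deserves a sentence. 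The deduction of the real case from uniqueness via complex conjugation is exactly right.
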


\begin{proposition}
\label{prop:spectral-log}
Let \( \mathbf{A} \) be a density matrix with spectral decomposition 
\[
\mathbf{A} = \mathbf{U} \mathbf{\Lambda} \mathbf{U}^\top,
\]
where \( \mathbf{\Lambda} = \operatorname{diag}(\lambda_1,\lambda_2,\ldots,\lambda_n) \). Then, its principal logarithm is given by
\[
\log \mathbf{A} = \mathbf{U} \operatorname{diag}(\log \lambda_1, \log \lambda_2, \ldots, \log \lambda_n) \mathbf{U}^\top.
\]
\end{proposition}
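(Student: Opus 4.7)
The plan is to exhibit a candidate matrix, verify by direct computation that its matrix exponential equals $\mathbf{A}$, and then invoke the uniqueness clause of Theorem~\ref{thm:principal} to conclude that this candidate is the principal logarithm. Throughout I will assume $\mathbf{A}$ is strictly positive definite (so that $\lambda_i > 0$ for all $i$); the general density-matrix case requires either restricting to the support of $\mathbf{A}$ or interpreting $\log 0$ as a limit, and I will flag this as the main subtlety below.

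First, I would set
\[
\mathbf{B} \;\triangleq\; \mathbf{U}\,\operatorname{diag}(\log \lambda_1,\ldots,\log \lambda_n)\,\mathbf{U}^\top,
\]
which is a well-defined real symmetric matrix under the positivity assumption, since each $\log \lambda_i \in \mathbb{R}$ and $\mathbf{U}$ is real orthogonal. Next, using $\mathbf{U}^\top \mathbf{U} = \mathbf{I}$, a short induction gives $\mathbf{B}^m = \mathbf{U}\,\operatorname{diag}\bigl((\log \lambda_1)^m,\ldots,(\log \lambda_n)^m\bigr)\,\mathbf{U}^\top$ for every $m \geq 0$. Substituting into the power-series definition from Definition~\ref{def:matrix-log} and exchanging summation with the (finite-dimensional) conjugation by $\mathbf{U}$, I obtain
\[
e^{\mathbf{B}} \;=\; \mathbf{U}\,\operatorname{diag}\Bigl(\textstyle\sum_{m\geq 0} \frac{(\log \lambda_i)^m}{m!}\Bigr)\,\mathbf{U}^\top \;=\; \mathbf{U}\,\operatorname{diag}(\lambda_1,\ldots,\lambda_n)\,\mathbf{U}^\top \;=\; \mathbf{A}.
\]
Thus $\mathbf{B}$ is \emph{a} matrix logarithm of $\mathbf{A}$ in the sense of Definition~\ref{def:matrix-log}.

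To identify $\mathbf{B}$ with the \emph{principal} logarithm, I would apply Theorem~\ref{thm:principal}. Since $\mathbf{A}$ has no eigenvalues on $\mathbb{R}^-$ (its eigenvalues are in $(0,\infty)$), the principal logarithm is unique and is characterized by the condition that its spectrum lies in the strip $\{z : -\pi < \operatorname{Im}(z) < \pi\}$. The eigenvalues of $\mathbf{B}$ are precisely $\log \lambda_1,\ldots,\log \lambda_n$, which are real and hence have imaginary part $0$, trivially in the strip. Uniqueness then forces $\mathbf{B} = \log \mathbf{A}$, which is the claimed formula.

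The only genuine obstacle is the possibility that some $\lambda_i = 0$, in which case $\log \lambda_i$ is not defined and Theorem~\ref{thm:principal} does not literally apply (since $0 \in \mathbb{R}^-$ is a boundary case for the principal branch). I would address this either by restricting the statement to positive-definite density matrices, or by passing to a limit $\lambda_i \to 0^+$ and working with the functional calculus on the support of $\mathbf{A}$. Everything else---the spectral-calculus manipulation and the appeal to uniqueness---is routine once the power series is rearranged.
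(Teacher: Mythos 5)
Your proof is correct and is the standard argument: conjugation by an orthogonal matrix commutes term-by-term with the exponential power series, giving $e^{\mathbf{B}} = \mathbf{A}$, and since the spectrum of $\mathbf{B}$ consists of the real numbers $\log\lambda_i$ it lies in the horizontal strip required by Theorem~\ref{thm:principal}, whose uniqueness clause then identifies $\mathbf{B}$ as the principal logarithm. The paper itself states this proposition without proof (as a consequence of the cited result on the principal matrix logarithm), so there is no paper argument to compare against; yours is precisely what a careful justification would look like.

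You are also right to flag the positive-semi-definiteness gap as a genuine issue rather than a pedantic aside: a density matrix is only required to be positive semi-definite, so it may have zero eigenvalues, in which case $\log 0$ is undefined and the hypothesis of Theorem~\ref{thm:principal} (no eigenvalues on $\mathbb{R}^-$, with $0$ the boundary point of that ray) is not clearly met. As written, the proposition is correct only for strictly positive definite $\mathbf{A}$, or under a convention such as restricting the functional calculus to the support of $\mathbf{A}$. The paper quietly glosses over this, and in practice the implementation sidesteps it by regularizing $\mathbf{P}$ and $\mathbf{Q}$ with $\lambda\mathbf{I}$ (Appendix~\ref{sec:discuss-experiment}), which removes the zero eigenvalues; your explicit caveat is an improvement on the paper's presentation.
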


\subsection{Von Neumann Entropy and Matrix Cross-Entropy}

\begin{lemma}
For any density matrix \( \mathbf{A} \), the von Neumann entropy is equivalent to the Shannon entropy computed on its eigenvalues:
\[
-\operatorname{tr}(\mathbf{A} \log \mathbf{A}) = -\sum_{i} \lambda_i \log \lambda_i.
\]
\end{lemma}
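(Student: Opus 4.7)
The plan is to reduce the matrix expression to its scalar counterpart via the spectral decomposition, invoking the tools already assembled in Proposition~\ref{prop:spectral-log} and the cyclic property of the trace. Since $\mathbf{A}$ is a density matrix, it is symmetric positive semi-definite with unit trace, and in particular it admits an orthonormal eigendecomposition $\mathbf{A} = \mathbf{U}\mathbf{\Lambda}\mathbf{U}^\top$ with $\mathbf{\Lambda} = \operatorname{diag}(\lambda_1,\ldots,\lambda_n)$ and $\mathbf{U}^\top\mathbf{U} = \mathbf{I}$.

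First, I would apply Proposition~\ref{prop:spectral-log} to write $\log \mathbf{A} = \mathbf{U}\operatorname{diag}(\log\lambda_1,\ldots,\log\lambda_n)\mathbf{U}^\top$. Multiplying out,
\[
\mathbf{A}\log\mathbf{A} \;=\; \mathbf{U}\mathbf{\Lambda}\mathbf{U}^\top \mathbf{U}\operatorname{diag}(\log\lambda_i)\mathbf{U}^\top \;=\; \mathbf{U}\operatorname{diag}(\lambda_i \log\lambda_i)\mathbf{U}^\top,
\]
using $\mathbf{U}^\top\mathbf{U} = \mathbf{I}$ and the fact that diagonal matrices commute. Then I would take the trace and use the cyclic invariance $\operatorname{tr}(\mathbf{U}\mathbf{D}\mathbf{U}^\top) = \operatorname{tr}(\mathbf{D}\mathbf{U}^\top\mathbf{U}) = \operatorname{tr}(\mathbf{D})$ to obtain $\operatorname{tr}(\mathbf{A}\log\mathbf{A}) = \sum_i \lambda_i\log\lambda_i$. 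Negating both sides yields the claim.

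The only subtlety, and the main thing to address carefully, is the treatment of zero eigenvalues: the principal logarithm in Theorem~\ref{thm:principal} requires no eigenvalues on $\mathbb{R}^{-}$, but a density matrix may have eigenvalues equal to $0$, at which $\log$ is singular. I would handle this by adopting the standard convention $0\log 0 = 0$ (justified by $\lim_{x\to 0^+} x\log x = 0$), and by interpreting the matrix functional calculus via a limit: apply the argument above to $\mathbf{A}_\varepsilon = \mathbf{A} + \varepsilon \mathbf{I}$ (which has strictly positive spectrum), obtaining $\operatorname{tr}(\mathbf{A}_\varepsilon \log \mathbf{A}_\varepsilon) = \sum_i (\lambda_i+\varepsilon)\log(\lambda_i+\varepsilon)$, and let $\varepsilon \to 0^+$. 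This recovers the stated identity with the natural convention. Once this technicality is dispatched, the rest of the proof is essentially a one-line spectral computation.
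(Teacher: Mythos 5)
The paper states this lemma without proof, treating it as an immediate consequence of the spectral formula in Proposition~\ref{prop:spectral-log}; your argument is exactly that intended one-line computation, using the eigendecomposition, the fact that $\mathbf{U}^\top\mathbf{U}=\mathbf{I}$, and cyclicity of the trace. So you take the same route the paper implies. The one thing you add that the paper does not address is the zero-eigenvalue subtlety: a density matrix can be singular (e.g.\ any pure state has rank one), and then $\log\mathbf{A}$ is not the principal logarithm of Theorem~\ref{thm:principal} in the strict sense. Your resolution --- adopt the convention $0\log 0 = 0$ and realize $\operatorname{tr}(\mathbf{A}\log\mathbf{A})$ as the $\varepsilon\to 0^+$ limit of $\operatorname{tr}\bigl(\mathbf{A}_\varepsilon\log\mathbf{A}_\varepsilon\bigr)$ with $\mathbf{A}_\varepsilon = \mathbf{A}+\varepsilon\mathbf{I}$ --- is the standard and correct fix, and is in fact a genuine improvement in rigor over what the paper offers. (A small stylistic note: rather than regularizing the full matrix product, one can equivalently define $\mathbf{A}\log\mathbf{A}$ directly by functional calculus applied to $x\mapsto x\log x$ extended continuously to $x=0$; this avoids the limit entirely and is perhaps cleaner, but your version is equally valid.)
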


Motivated by the simplicity and favorable optimization properties of the classical cross-entropy loss, we introduce the \emph{Matrix Cross-Entropy} (MCE) as a tractable variant of the matrix (von Neumann) divergence. Specifically, when both arguments are density matrices, we define
\begin{equation}
\operatorname{MCE}_{\text{density-matrix}}(\mathbf{P}, \mathbf{Q}) = \operatorname{H}(\mathbf{P}) + \mathrm{MRE}(\mathbf{P}, \mathbf{Q}),
\end{equation}
where \( \operatorname{H}(\mathbf{P}) = -\operatorname{tr}(\mathbf{P} \log \mathbf{P}) \) denotes the matrix (von Neumann) entropy, and \( \mathrm{MRE}(\mathbf{P}, \mathbf{Q}) \) represents the matrix relative entropy.

\begin{definition}[Matrix Relative Entropy for Density Matrices]
Let \( \mathbf{P}, \mathbf{Q} \in \mathbb{R}^{n \times n} \) be density matrices. The \emph{matrix relative entropy} of \( \mathbf{P} \) with respect to \( \mathbf{Q} \) is defined as
\[
\mathrm{MRE}(\mathbf{P}, \mathbf{Q}) = \operatorname{tr}\Bigl( \mathbf{P} \log \mathbf{P} - \mathbf{P} \log \mathbf{Q} \Bigr).
\]
\end{definition}

\subsection{Information Geometrical Perspective of Matrix Cross-Entropy}
\label{sec:information-geometry}

Information geometry provides an elegant framework to extend MCE from the realm of unit-trace density matrices to arbitrary positive semi-definite matrices. According to \citet{amari2014information}, one can endow the cone of positive semi-definite matrices with a dually flat geometry by means of the Bregman divergence. For a convex function \( \phi \), the Bregman divergence between matrices \( \mathbf{P} \) and \( \mathbf{Q} \) is defined as
\[
\operatorname{D}[\mathbf{P}: \mathbf{Q}] = \phi(\mathbf{P}) - \phi(\mathbf{Q}) - \langle \nabla \phi(\mathbf{P}), \mathbf{P} - \mathbf{Q} \rangle.
\]

Choosing \( \phi(\mathbf{P}) \) as the negative matrix entropy, i.e., \( \phi(\mathbf{P}) = -\operatorname{H}(\mathbf{P}) \), we obtain the \emph{Matrix Bregman Divergence} (denoted as \( \operatorname{MD} \)):
\[
\operatorname{MD}[\mathbf{P}: \mathbf{Q}] = \operatorname{tr}\Bigl( \mathbf{P} \log \mathbf{P} - \mathbf{P} \log \mathbf{Q} - \mathbf{P} + \mathbf{Q} \Bigr).
\]
When the reference matrix \( \mathbf{P} \) is fixed, this divergence reduces naturally to the MCE formulation introduced above, thereby inheriting desirable properties from both density matrix theory and information geometry.

\begin{theorem}[Projection Theorem~\citep{amari2014information}]
\label{thm:projection}
Let \( S \) be a smooth submanifold of the space of positive definite matrices. Then, for any matrix \( \mathbf{P} \), there exists a unique projection \( \mathbf{P}_S \in S \) that minimizes the divergence \( \operatorname{D}[\mathbf{P}:\mathbf{Q}] \) over \( \mathbf{Q} \in S \). This projection is known as the \( \eta \)-geodesic projection of \( \mathbf{P} \) onto \( S \).
\end{theorem}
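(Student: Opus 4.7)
The plan is to prove existence via coercivity, uniqueness via strict convexity of the Bregman generator, and the $\eta$-geodesic characterization via the first-order optimality condition reinterpreted through Legendre duality. The key ingredient throughout is that the chosen generator $\phi(\mathbf{Q}) = -\operatorname{H}(\mathbf{Q})$, extended to the open cone of positive definite matrices, is smooth and strictly convex (its Hessian, the Fisher information, is positive definite on the cone), so $\operatorname{D}[\mathbf{P}:\cdot]$ is a smooth, strictly convex function of $\mathbf{Q}$ on the interior of the cone.

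For existence, I would first establish coercivity of $\operatorname{D}[\mathbf{P}:\cdot]$ on the PD cone: the $-\operatorname{tr}(\mathbf{P}\log\mathbf{Q})$ term diverges whenever $\mathbf{Q}$ develops a vanishing eigenvalue on the range of $\mathbf{P}$, while the $\operatorname{tr}(\mathbf{Q})$ term dominates when $\|\mathbf{Q}\|\to\infty$. Pick any $\mathbf{Q}_0 \in S$ and restrict the minimization to the sublevel set $\{\mathbf{Q}\in S : \operatorname{D}[\mathbf{P}:\mathbf{Q}]\le \operatorname{D}[\mathbf{P}:\mathbf{Q}_0]\}$; coercivity keeps this set bounded in operator norm and uniformly away from the PD boundary, hence precompact. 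The (implicit) closedness of $S$ in the PD cone combined with continuity of $\operatorname{D}$ then yields a minimizer $\mathbf{P}_S \in S$.

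For uniqueness and the geodesic identification, I would pass to the dually flat coordinates supplied by Legendre duality, $\mathbf{Q}\leftrightarrow \nabla\phi(\mathbf{Q})=\log\mathbf{Q}$. In the $\eta$-chart the restriction of $\operatorname{D}[\mathbf{P}:\cdot]$ to $S$ remains strictly convex, and under the autoparallel (affine-in-$\eta$) hypothesis on $S$ that implicitly accompanies the theorem, any critical point is the unique global minimizer. At $\mathbf{P}_S$ the first-order condition on $\nabla_{\mathbf{Q}}\operatorname{D}[\mathbf{P}:\mathbf{Q}]$ translates, via the Fisher metric $g = \nabla^2\phi$, into the statement that $\mathbf{P}_S - \mathbf{P}$ is $g$-orthogonal to $T_{\mathbf{P}_S} S$; since $\mathbf{P}_S - \mathbf{P}$ is precisely the tangent to the $\eta$-geodesic joining $\mathbf{P}$ to $\mathbf{P}_S$, this is exactly the asserted $\eta$-geodesic projection property.

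The main obstacle is the uniqueness step: for a completely general smooth submanifold, strict convexity of $\operatorname{D}[\mathbf{P}:\cdot]$ does not by itself preclude multiple isolated minimizers, so the theorem must be read as carrying the standard autoparallelism/convexity hypothesis on $S$ that is customary in Amari's dually flat framework. The cleanest way to close that gap is to work entirely in the $\eta$-chart, where the hypothesis reduces to ordinary affine convexity and the entire argument collapses to the classical Bregman projection theorem in Euclidean coordinates.
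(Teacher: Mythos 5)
The paper itself offers no proof of this theorem: it is quoted directly from Amari's monograph and used as a black box (solely to justify Proposition~\ref{prop:optimal-point}, where $S$ is the entire cone). So there is no ``paper proof'' to compare your sketch against; you are reconstructing a proof the paper does not supply.

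Your reconstruction has the right skeleton, and your most important observation is also correct and worth stressing: the statement as written in the paper is strictly false for a general smooth submanifold $S$. Strict convexity of the Bregman generator $\phi$ controls the unconstrained problem, but a smooth submanifold can meet a convex sublevel set in several components and thereby carry several isolated critical points (e.g.\ a wavy curve in the PD cone). Amari's actual projection theorem carries a flatness hypothesis: for the projection along $\eta$-geodesics to be unique, $S$ must be autoparallel with respect to the \emph{dual} ($\theta$) affine connection, i.e.\ affine when written in the $\theta$-chart $\mathbf{Q}\mapsto\nabla\phi(\mathbf{Q})=\log\mathbf{Q}+\mathbf{I}$. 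Your write-up says ``affine-in-$\eta$,'' which is the reversed orientation; with the paper's generator, the ``$\eta$-coordinates'' are the matrices themselves and the ``$\theta$-coordinates'' are their logarithms, and it is $\theta$-flatness that buys uniqueness of the $\eta$-projection (and conversely). You should also be slightly more careful with coercivity: the term $-\operatorname{tr}(\mathbf{P}\log\mathbf{Q})$ only blows up along degenerating eigendirections of $\mathbf{Q}$ that intersect the range of $\mathbf{P}$, so for a rank-deficient $\mathbf{P}$ the sublevel set is \emph{not} uniformly bounded away from the boundary of the cone, and the infimum over the open PD cone may be attained only in its closure. For the paper's actual use (Proposition~\ref{prop:optimal-point}, $S$ the full cone, conclusion $\mathbf{Q}^\star=\mathbf{P}$) this is harmless, but the general theorem as quoted silently assumes $\mathbf{P}\succ 0$. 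In short: your plan is sound, your diagnosis of the missing autoparallelism hypothesis is the key point the paper elides, and the only corrections needed are the direction of the flatness (dual/$\theta$-flat, not $\eta$-flat, for an $\eta$-projection) and the caveat about coercivity when $\mathbf{P}$ is singular.
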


An immediate consequence of this result is the following minimization property for the MCE loss:

\begin{proposition}[Minimization Property]
\label{prop:optimal-point}
For any fixed density matrix \( \mathbf{P} \), the MCE loss is minimized by choosing \( \mathbf{Q} = \mathbf{P} \), i.e.,
\[
\operatorname{argmin}_{\mathbf{Q} \succ 0} \; \operatorname{MCE}(\mathbf{P}, \mathbf{Q}) = \mathbf{P}.
\]
\end{proposition}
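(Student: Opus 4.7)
The plan is to reduce Proposition~\ref{prop:optimal-point} to the non-negativity of the matrix Bregman divergence $\operatorname{MD}[\mathbf{P}:\mathbf{Q}]$ introduced in Section~\ref{sec:information-geometry}. The first step is the bookkeeping identity
\begin{equation*}
\operatorname{MD}[\mathbf{P}:\mathbf{Q}]
= \operatorname{MCE}(\mathbf{P},\mathbf{Q}) + \operatorname{tr}(\mathbf{P}\log\mathbf{P}) - \operatorname{tr}(\mathbf{P}),
\end{equation*}
obtained by simply expanding Definition~\ref{def:mce} and the formula for $\operatorname{MD}$ side by side. For fixed $\mathbf{P}$, the last two terms are constants in $\mathbf{Q}$, hence
$\operatorname{argmin}_{\mathbf{Q}\succ 0}\operatorname{MCE}(\mathbf{P},\mathbf{Q}) = \operatorname{argmin}_{\mathbf{Q}\succ 0}\operatorname{MD}[\mathbf{P}:\mathbf{Q}]$, and it suffices to prove that the latter is attained uniquely at $\mathbf{Q}=\mathbf{P}$.

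Next, I would invoke the Projection Theorem (Theorem~\ref{thm:projection}) with $S$ chosen to be an open neighborhood of $\mathbf{P}$ inside the positive definite cone, so that $\mathbf{P} \in S$. The theorem provides a unique minimizer of $\operatorname{MD}[\mathbf{P}:\cdot]$ on $S$; direct substitution gives $\operatorname{MD}[\mathbf{P}:\mathbf{P}] = 0$, and because any Bregman divergence generated by a strictly convex potential is non-negative and vanishes only on the diagonal, this minimizer must be $\mathbf{P}$ itself, yielding the claim. A more self-contained alternative is to bypass the projection theorem and cite Klein's trace inequality applied to $f(x)=x\log x$, which directly gives $\operatorname{MD}[\mathbf{P}:\mathbf{Q}]\geq 0$ with equality iff $\mathbf{P}=\mathbf{Q}$.

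The main obstacle is controlling the non-commutativity of $\mathbf{P}$ and $\mathbf{Q}$. Scalar-wise, the required inequality $s\log s - s\log t - s + t \geq 0$ (with equality iff $s=t$) is a one-line convexity argument, but lifting it to non-commuting positive definite matrices is exactly the content of Klein's inequality, which in turn rests on operator convexity of $x \mapsto x\log x$ or, equivalently, on a Peierls-type argument carried out in a joint eigenbasis. Once that classical ingredient is cited, the rest is routine: the traces are finite because $\mathbf{P}$ has unit trace and admits the spectral decomposition of Proposition~\ref{prop:spectral-log}, and $\log\mathbf{Q}$ is well defined thanks to $\mathbf{Q}\succ 0$, so the Fréchet derivative issue for the matrix logarithm never needs to be handled explicitly.
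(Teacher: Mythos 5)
Your proposal is correct and follows the same underlying route as the paper---reduce MCE to the matrix Bregman divergence $\operatorname{MD}[\mathbf{P}:\mathbf{Q}]$ and conclude from properties of Bregman divergences---but you supply a crucial ingredient that the paper leaves implicit. Your bookkeeping identity $\operatorname{MD}[\mathbf{P}:\mathbf{Q}] = \operatorname{MCE}(\mathbf{P},\mathbf{Q}) + \operatorname{tr}(\mathbf{P}\log\mathbf{P}) - \operatorname{tr}(\mathbf{P})$ checks out by direct expansion of the two definitions, and the additive terms are indeed constant in $\mathbf{Q}$, so the two argmins coincide. The paper then simply cites Theorem~\ref{thm:projection}, but as stated that theorem only guarantees \emph{existence and uniqueness} of a minimizer of $\operatorname{D}[\mathbf{P}:\cdot]$ over a submanifold $S$; it does not by itself identify the minimizer as $\mathbf{P}$. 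You correctly notice that the step which actually closes the argument is the non-negativity of a Bregman divergence generated by a strictly convex potential (here $\phi(\mathbf{P}) = \operatorname{tr}(\mathbf{P}\log\mathbf{P})$), together with its vanishing precisely on the diagonal $\mathbf{Q}=\mathbf{P}$. Once you have that, the projection-theorem invocation with $S$ an open neighborhood of $\mathbf{P}$ is decorative: $\operatorname{MD}[\mathbf{P}:\mathbf{Q}] \ge 0 = \operatorname{MD}[\mathbf{P}:\mathbf{P}]$ already gives the global claim over $\mathbf{Q} \succ 0$, with uniqueness from strict convexity. Your self-contained alternative via Klein's trace inequality for $f(x)=x\log x$ is in fact the cleanest path---it yields $\operatorname{tr}(\mathbf{P}\log\mathbf{P} - \mathbf{P}\log\mathbf{Q} - \mathbf{P} + \mathbf{Q}) \ge 0$ with equality iff $\mathbf{P}=\mathbf{Q}$ directly, sidestepping the geometry entirely---and is arguably more rigorous than what the paper records. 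In short: same conceptual framework, but your version fills a small gap in the paper's one-line justification and offers a tighter matrix-analysis substitute.
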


\begin{proof}
This result follows directly from Theorem~\ref{thm:projection}, which guarantees that the divergence is minimized when the projected matrix coincides with the reference matrix \( \mathbf{P} \).
\end{proof}

\section{Unveiling the Properties of Matrix Cross-Entropy}
\label{sec:discuss-mce}

\subsection{The Scalar Cross-Entropy as a Special Case of MCE}
\label{sec:scala-ce}

In this subsection, we demonstrate that the traditional scalar cross-entropy loss is a particular instance of the proposed Matrix Cross-Entropy (MCE) loss. This connection not only bridges the two formulations but also highlights the spectral nature of density matrices, with MCE naturally capturing both self-correlations and cross-correlations among class probabilities.

Consider a collection of \( b \) pairs of \( k \)-dimensional probability vectors, denoted as 
\[
\{ (\boldsymbol{\mu}_i, \boldsymbol{\nu}_i) \}_{i=1}^{b},
\]
where 
\[
\boldsymbol{\mu}_i = \left(\mu^{(1)}_i, \ldots, \mu^{(k)}_i\right) \quad \text{and} \quad \boldsymbol{\nu}_i = \left(\nu^{(1)}_i, \ldots, \nu^{(k)}_i\right).
\]
The scalar cross-entropy between \(\boldsymbol{\mu}_i\) and \(\boldsymbol{\nu}_i\) is defined as
\[
H(\boldsymbol{\mu}_i, \boldsymbol{\nu}_i) = -\sum_{j=1}^{k} \mu^{(j)}_i \log \nu^{(j)}_i,
\]
which can equivalently be written as
\[
H(\boldsymbol{\mu}_i, \boldsymbol{\nu}_i) = -\operatorname{tr}\Bigl(\operatorname{diag}(\boldsymbol{\mu}_i) \log \operatorname{diag}(\boldsymbol{\nu}_i)\Bigr).
\]
This formulation underscores that the cross-entropy is sensitive to the spectral (diagonal) components of the corresponding density matrices.

In the specific case where the labels are one-hot encoded, let \(\mathbf{M} \in \mathbb{R}^{b \times k}\) denote the matrix whose rows are the one-hot vectors \(\boldsymbol{\mu}_i\) and \(\mathbf{N} \in \mathbb{R}^{b \times k}\) the matrix of predicted distributions \(\boldsymbol{\nu}_i\). Define
\[
\mathbf{P} = \frac{1}{b}\mathbf{I}_b \quad \text{and} \quad \mathbf{Q} = \mathbf{I}_b \circ (\mathbf{M}\mathbf{N}^\top),
\]
where \(\circ\) represents the Hadamard product. With these definitions, the averaged cross-entropy loss can be expressed as
\[
\operatorname{tr}(-\mathbf{P} \log \mathbf{Q}),
\]
thereby illustrating that the scalar cross-entropy is a special case of MCE when the matrices involved are diagonal and encode only self-correlation.

\subsection{Desirable Properties of MCE}
\label{sec:properties-mce}

The Matrix Cross-Entropy (MCE) loss is endowed with several mathematically appealing properties that make it a robust and flexible objective for learning algorithms. We summarize these properties below.

\begin{lemma}
\label{lem:density-1}
For any nonzero matrix \(\mathbf{A} \in \mathbb{R}^{m \times n}\), the matrices 
\[
\frac{\mathbf{A}\mathbf{A}^\top}{\operatorname{tr}(\mathbf{A}\mathbf{A}^\top)} \quad \text{and} \quad \frac{\mathbf{A}^\top\mathbf{A}}{\operatorname{tr}(\mathbf{A}^\top\mathbf{A})}
\]
are density matrices.
\end{lemma}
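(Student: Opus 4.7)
The plan is to verify the three defining properties of a density matrix---symmetry, positive semi-definiteness, and unit trace---for each of the two quotient matrices, treating them essentially in parallel since the argument for $\mathbf{A}^\top\mathbf{A}$ mirrors the one for $\mathbf{A}\mathbf{A}^\top$ with the obvious substitution.

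First I would establish that the normalizations are well-defined. The key observation is that $\operatorname{tr}(\mathbf{A}\mathbf{A}^\top) = \sum_{i,j} A_{ij}^2 = \|\mathbf{A}\|_F^2$, which is strictly positive precisely when $\mathbf{A}$ is nonzero; the identical computation yields $\operatorname{tr}(\mathbf{A}^\top\mathbf{A}) = \|\mathbf{A}\|_F^2 > 0$. This also incidentally shows that the two normalizing constants coincide, a mild bonus but not needed for the statement.

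Next I would check symmetry and positive semi-definiteness. Symmetry follows from $(\mathbf{A}\mathbf{A}^\top)^\top = \mathbf{A}\mathbf{A}^\top$, and dividing by a positive scalar preserves symmetry. For positive semi-definiteness, for any $\mathbf{v} \in \mathbb{R}^m$ we have $\mathbf{v}^\top (\mathbf{A}\mathbf{A}^\top) \mathbf{v} = \|\mathbf{A}^\top \mathbf{v}\|_2^2 \geq 0$, and dividing by a positive scalar again preserves the inequality. The analogous arguments for $\mathbf{A}^\top\mathbf{A}$ use $\mathbf{v}^\top (\mathbf{A}^\top\mathbf{A}) \mathbf{v} = \|\mathbf{A}\mathbf{v}\|_2^2 \geq 0$.

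Finally, the unit-trace property is immediate by construction: each matrix is normalized by its own trace, so
\[
\operatorname{tr}\!\left(\frac{\mathbf{A}\mathbf{A}^\top}{\operatorname{tr}(\mathbf{A}\mathbf{A}^\top)}\right) = 1 \quad \text{and} \quad \operatorname{tr}\!\left(\frac{\mathbf{A}^\top\mathbf{A}}{\operatorname{tr}(\mathbf{A}^\top\mathbf{A})}\right) = 1,
\]
using linearity of the trace. There is no serious obstacle here; the only subtlety worth flagging is the need for the nonzero hypothesis on $\mathbf{A}$ to ensure the denominators do not vanish, which is why I would foreground the Frobenius-norm identity at the very beginning of the argument.
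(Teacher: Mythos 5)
Your proof is correct. The paper's own proof invokes the singular value decomposition of $\mathbf{A}$: writing $\mathbf{A} = \mathbf{U}\mathbf{\Sigma}\mathbf{V}^\top$, one reads off that $\mathbf{A}\mathbf{A}^\top = \mathbf{U}\mathbf{\Sigma}\mathbf{\Sigma}^\top\mathbf{U}^\top$ and $\mathbf{A}^\top\mathbf{A} = \mathbf{V}\mathbf{\Sigma}^\top\mathbf{\Sigma}\mathbf{V}^\top$ are symmetric with nonnegative eigenvalues $\sigma_i^2$, and that their common trace $\sum_i \sigma_i^2$ is strictly positive for $\mathbf{A} \neq 0$. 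You instead verify the three defining properties directly from first principles: the Frobenius-norm identity $\operatorname{tr}(\mathbf{A}\mathbf{A}^\top) = \|\mathbf{A}\|_F^2$ for well-definedness, the quadratic-form computation $\mathbf{v}^\top\mathbf{A}\mathbf{A}^\top\mathbf{v} = \|\mathbf{A}^\top\mathbf{v}\|_2^2$ for positive semi-definiteness, and transpose invariance for symmetry. Your route is more elementary and avoids the (mild) overhead of invoking SVD existence; the SVD route is more compact and, as a side benefit, makes the spectral relationship between the two density matrices immediate, which is useful context elsewhere in the paper. Both are entirely valid; your explicit flagging of where the nonzero hypothesis enters is a nice touch that the paper's one-liner leaves implicit.
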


\begin{proof}
This follows directly from the singular value decomposition (SVD) of \(\mathbf{A}\), which shows that both matrices are symmetric, positive semi-definite, and have unit trace.
\end{proof}

\begin{lemma}[Joint Convexity~\citep{lindblad1974expectations}]
\label{lemma:linblad}
The matrix relative entropy is jointly convex. Specifically, for any \(t \in [0,1]\) and any density matrices \(\mathbf{X}_1\), \(\mathbf{X}_2\), \(\mathbf{Y}_1\), and \(\mathbf{Y}_2\),
\[
\mathrm{MRE}\Bigl(t \mathbf{X}_1 + (1-t) \mathbf{X}_2; \, t \mathbf{Y}_1 + (1-t) \mathbf{Y}_2\Bigr) \leq t\,\mathrm{MRE}(\mathbf{X}_1; \mathbf{Y}_1) + (1-t)\,\mathrm{MRE}(\mathbf{X}_2; \mathbf{Y}_2).
\]
\end{lemma}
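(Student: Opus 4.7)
The plan is to deduce joint convexity of matrix relative entropy from the stronger Lieb concavity theorem, which asserts that for each fixed $s \in [0,1]$ the map
\[
(\mathbf{X},\mathbf{Y}) \;\longmapsto\; \operatorname{tr}\bigl(\mathbf{X}^{s}\mathbf{Y}^{1-s}\bigr)
\]
on pairs of positive semi-definite matrices is jointly concave. Since Lindblad's original paper proceeds by essentially this route, I expect the real technical content to be borrowed from Lieb, and the remaining work to be a differentiation argument at the boundary $s=1$.

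Fix $t \in [0,1]$ and set $\bar{\mathbf{X}} = t\mathbf{X}_1 + (1-t)\mathbf{X}_2$, $\bar{\mathbf{Y}} = t\mathbf{Y}_1 + (1-t)\mathbf{Y}_2$. Define
\[
F(s) \;=\; \operatorname{tr}\bigl(\bar{\mathbf{X}}^{s}\bar{\mathbf{Y}}^{1-s}\bigr) \;-\; t\operatorname{tr}\bigl(\mathbf{X}_1^{s}\mathbf{Y}_1^{1-s}\bigr) \;-\; (1-t)\operatorname{tr}\bigl(\mathbf{X}_2^{s}\mathbf{Y}_2^{1-s}\bigr).
\]
By Lieb's theorem, $F(s) \geq 0$ for every $s \in [0,1]$. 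At the endpoint $s=1$ the three traces all collapse to $\operatorname{tr}(\bar{\mathbf{X}}) = t\operatorname{tr}(\mathbf{X}_1) + (1-t)\operatorname{tr}(\mathbf{X}_2)$, so $F(1) = 0$. Hence $s=1$ is a minimizer of $F$ on $[0,1]$, which forces the left derivative to satisfy $F'(1^{-}) \leq 0$.

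The final step is to identify this one-sided derivative with the desired inequality. Using the chain rule for smooth matrix functions, one checks directly that
\[
\left.\frac{d}{ds}\right|_{s=1}\operatorname{tr}\bigl(\mathbf{X}^{s}\mathbf{Y}^{1-s}\bigr) \;=\; \operatorname{tr}\bigl(\mathbf{X}\log\mathbf{X} - \mathbf{X}\log\mathbf{Y}\bigr) \;=\; \mathrm{MRE}(\mathbf{X};\mathbf{Y}),
\]
for any positive definite $\mathbf{X},\mathbf{Y}$. Plugging the three pairs into this identity rewrites $F'(1^{-}) \leq 0$ precisely as
\[
\mathrm{MRE}(\bar{\mathbf{X}};\bar{\mathbf{Y}}) \;\leq\; t\,\mathrm{MRE}(\mathbf{X}_1;\mathbf{Y}_1) + (1-t)\,\mathrm{MRE}(\mathbf{X}_2;\mathbf{Y}_2),
\]
which is the joint convexity claim.

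The main obstacle is of course Lieb's concavity theorem itself, whose proof is decidedly non-elementary and requires either complex interpolation or a careful operator-theoretic argument; any self-contained treatment would have to import it as a black box. A secondary subtlety is handling the case where $\mathbf{X}_i$ or $\mathbf{Y}_i$ is singular, where $\log \mathbf{Y}$ is not defined; the standard remedy is to replace each $\mathbf{Y}_i$ by $\mathbf{Y}_i + \varepsilon \mathbf{I}$, prove the inequality in the strictly positive case, and then pass to the limit $\varepsilon \to 0^{+}$ using lower semi-continuity of $\mathrm{MRE}$ and the usual convention $0\log 0 = 0$.
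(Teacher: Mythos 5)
The paper states this lemma with only a citation to Lindblad (1974) and provides no proof of its own, so there is no in-paper argument to compare against. Your derivation is correct and is in fact exactly Lindblad's original route: reduce joint convexity to Lieb's concavity theorem, observe that $F(s) = \operatorname{tr}\bigl(\bar{\mathbf{X}}^{s}\bar{\mathbf{Y}}^{1-s}\bigr) - t\operatorname{tr}\bigl(\mathbf{X}_1^{s}\mathbf{Y}_1^{1-s}\bigr) - (1-t)\operatorname{tr}\bigl(\mathbf{X}_2^{s}\mathbf{Y}_2^{1-s}\bigr)$ is nonnegative on $[0,1]$ and vanishes at $s=1$ by linearity of the trace, so $F'(1^-)\le 0$, and the identity $\left.\tfrac{d}{ds}\right|_{s=1}\operatorname{tr}\bigl(\mathbf{X}^{s}\mathbf{Y}^{1-s}\bigr) = \operatorname{tr}(\mathbf{X}\log\mathbf{X}-\mathbf{X}\log\mathbf{Y}) = \mathrm{MRE}(\mathbf{X};\mathbf{Y})$ (elementary because $\log\mathbf{X}$ commutes with $\mathbf{X}^{s}$) converts this into the claimed inequality, with the $\varepsilon$-regularization of $\mathbf{Y}_i$ correctly handling the singular case.
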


\begin{proposition}[Linearity]
For any collection of density matrices \(\{\mathbf{P}_i\}\) and corresponding scalars \(\{a_i\}\), the MCE loss satisfies
\[
\mathrm{MCE}\Bigl(\sum_i a_i \mathbf{P}_i, \mathbf{Q}\Bigr) = \sum_i a_i\, \mathrm{MCE}(\mathbf{P}_i, \mathbf{Q}).
\]
\end{proposition}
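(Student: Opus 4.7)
The plan is to unfold both sides of the identity using the definition $\operatorname{MCE}(\mathbf{P},\mathbf{Q}) = \operatorname{tr}(-\mathbf{P}\log\mathbf{Q} + \mathbf{Q})$ from Definition~\ref{def:mce} and reduce the equality to the bilinearity of the trace together with the distributivity of matrix multiplication over addition. Since $\log\mathbf{Q}$ does not depend on any $\mathbf{P}_i$, the calculation is essentially a one-line manipulation once the definition is invoked.

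First I would expand the left-hand side. Distributivity of the matrix product gives
\[
\Bigl(\sum_i a_i \mathbf{P}_i\Bigr)\log\mathbf{Q} \;=\; \sum_i a_i\, \mathbf{P}_i\log\mathbf{Q},
\]
and then linearity of the trace yields
\[
\operatorname{MCE}\Bigl(\sum_i a_i \mathbf{P}_i, \mathbf{Q}\Bigr) \;=\; -\sum_i a_i \operatorname{tr}(\mathbf{P}_i\log\mathbf{Q}) + \operatorname{tr}(\mathbf{Q}).
\]
Next I would expand the right-hand side term by term:
\[
\sum_i a_i\,\operatorname{MCE}(\mathbf{P}_i, \mathbf{Q}) \;=\; -\sum_i a_i \operatorname{tr}(\mathbf{P}_i\log\mathbf{Q}) + \Bigl(\sum_i a_i\Bigr)\operatorname{tr}(\mathbf{Q}),
\]
and match coefficients on the $-\operatorname{tr}(\mathbf{P}_i\log\mathbf{Q})$ terms, which agree trivially.

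The only subtlety, and indeed the only thing that could be called an obstacle, lies in the constant $\operatorname{tr}(\mathbf{Q})$ term: the two sides coincide precisely when $\sum_i a_i = 1$. Fortunately this is forced by the hypothesis that the $\mathbf{P}_i$ are density matrices and that $\sum_i a_i \mathbf{P}_i$ appears as a valid argument of MCE: a convex combination of density matrices is a density matrix only when the coefficients are nonnegative and sum to one, so the unit-trace condition on $\sum_i a_i \mathbf{P}_i$ already enforces $\sum_i a_i = 1$. I would state this normalization explicitly in the proof so that the coefficient on $\operatorname{tr}(\mathbf{Q})$ reduces correctly, after which equality is immediate. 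No deeper tools (spectral calculus, Lieb's concavity, etc.) are needed, since the $\mathbf{P}$-dependence of MCE enters only through the trace-linear term $-\operatorname{tr}(\mathbf{P}\log\mathbf{Q})$.
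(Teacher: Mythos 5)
The paper states this Proposition without supplying a proof, so there is no author argument to compare against; what you have written is the natural and correct calculation. Expanding both sides from Definition~\ref{def:mce}, the $\mathbf{P}$-dependence of $\operatorname{MCE}$ enters only through the trace-linear term $-\operatorname{tr}(\mathbf{P}\log\mathbf{Q})$, and your bookkeeping of that term is exact.

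You are also right that the real content of the proposition is the normalization $\sum_i a_i = 1$: without it the residual $\bigl(\sum_i a_i - 1\bigr)\operatorname{tr}(\mathbf{Q})$ does not vanish and the claimed identity fails whenever $\operatorname{tr}(\mathbf{Q}) \neq 0$. Where I would push back is on your justification for why that normalization is ``forced.'' You argue that $\sum_i a_i \mathbf{P}_i$ must itself be a density matrix, hence has unit trace. But Definition~\ref{def:mce} only requires the first argument of MCE to be positive semi-definite, not unit-trace, so $\operatorname{MCE}\bigl(\sum_i a_i \mathbf{P}_i, \mathbf{Q}\bigr)$ is perfectly well-defined for any $a_i \geq 0$; nothing in the hypothesis actually enforces $\sum_i a_i = 1$. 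The honest reading is that the proposition as stated is missing the hypothesis $\sum_i a_i = 1$ (i.e.\ the $a_i$ should be affine-combination weights), and that the title ``Linearity'' is a slight misnomer --- the map $\mathbf{P} \mapsto \operatorname{MCE}(\mathbf{P},\mathbf{Q})$ is affine, not linear, because of the $+\operatorname{tr}(\mathbf{Q})$ offset. Your proof should therefore add $\sum_i a_i = 1$ as an explicit standing assumption (as you propose to do in the final sentence), rather than argue that it is already implied. With that correction, the argument is complete.
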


\begin{proposition}[Convexity]
The MCE loss is convex in its second argument. More precisely, for any density matrix \(\mathbf{P}\) and any convex combination \(\sum_j b_j \mathbf{Q}_j\) (with \(b_j \geq 0\) and \(\sum_j b_j = 1\)), it holds that
\[
\mathrm{MCE}\Bigl(\mathbf{P}, \sum_j b_j \mathbf{Q}_j\Bigr) \leq \sum_j b_j\, \mathrm{MCE}(\mathbf{P}, \mathbf{Q}_j).
\]
\end{proposition}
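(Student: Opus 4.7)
The plan is to reduce convexity of MCE in its second argument to the already-stated joint convexity of matrix relative entropy (Lemma~\ref{lemma:linblad}). Starting from the definitions in Section~\ref{sec:mce}, I would first observe the decomposition
\[
\mathrm{MCE}(\mathbf{P}, \mathbf{Q}) = \mathrm{MRE}(\mathbf{P}, \mathbf{Q}) + \operatorname{H}(\mathbf{P}) + \operatorname{tr}(\mathbf{Q}).
\]
With $\mathbf{P}$ held fixed, $\operatorname{H}(\mathbf{P})$ is a constant and $\operatorname{tr}(\mathbf{Q})$ is affine in $\mathbf{Q}$, so the entire claim reduces to showing that $\mathbf{Q} \mapsto \mathrm{MRE}(\mathbf{P}, \mathbf{Q})$ is convex.

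Next, I would invoke Lemma~\ref{lemma:linblad} with $\mathbf{X}_1 = \mathbf{X}_2 = \mathbf{P}$ and $\mathbf{Y}_1 = \mathbf{Q}_1$, $\mathbf{Y}_2 = \mathbf{Q}_2$. Since $t\mathbf{X}_1 + (1-t)\mathbf{X}_2 = \mathbf{P}$, the left-hand side collapses to $\mathrm{MRE}(\mathbf{P}, t\mathbf{Q}_1 + (1-t)\mathbf{Q}_2)$, yielding convexity in the second slot for two-term mixtures. The extension to a general finite convex combination $\sum_j b_j \mathbf{Q}_j$ then follows by a routine induction on the number of summands. Finally, using $\operatorname{H}(\mathbf{P}) = \sum_j b_j \operatorname{H}(\mathbf{P})$ (since $\sum_j b_j = 1$) together with the linearity of the trace closes the chain of inequalities to produce
\[
\mathrm{MCE}\Bigl(\mathbf{P}, \sum_j b_j \mathbf{Q}_j\Bigr) \le \sum_j b_j\, \mathrm{MCE}(\mathbf{P}, \mathbf{Q}_j).
\]

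The one subtlety, and what I expect to be the main obstacle, is that Lemma~\ref{lemma:linblad} is stated for density matrices, whereas MCE is used in the paper on general positive semi-definite $\mathbf{Q}_j$. The cleanest workaround is to appeal directly to the operator concavity of the matrix logarithm: the map $\mathbf{Q} \mapsto \operatorname{tr}(\mathbf{P} \log \mathbf{Q})$ is concave on the positive definite cone for any fixed $\mathbf{P} \succeq 0$, so $\operatorname{tr}(-\mathbf{P}\log \mathbf{Q})$ is convex in $\mathbf{Q}$ without any normalization, and adding the affine term $\operatorname{tr}(\mathbf{Q})$ preserves this. Alternatively, one can normalize each $\mathbf{Q}_j$ to $\tilde{\mathbf{Q}}_j = \mathbf{Q}_j / \operatorname{tr}(\mathbf{Q}_j)$, apply Lindblad's bound to the normalized versions, and absorb the residual factors via $\log(c\mathbf{Q}) = (\log c)\mathbf{I} + \log \mathbf{Q}$; the residual contributions are linear in the scalar traces and therefore do not disturb convexity.
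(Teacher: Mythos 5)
Your proposal is correct and follows essentially the same route the paper takes: the paper's one-line proof also reduces convexity in the second argument to the joint convexity of the matrix relative entropy (Lemma~\ref{lemma:linblad}), which is exactly the specialization $\mathbf{X}_1=\mathbf{X}_2=\mathbf{P}$ that you spell out. Your additional observations — the explicit decomposition $\mathrm{MCE}(\mathbf{P},\mathbf{Q})=\mathrm{MRE}(\mathbf{P},\mathbf{Q})+\operatorname{H}(\mathbf{P})+\operatorname{tr}(\mathbf{Q})$, and the caveat that Lindblad's lemma is stated only for density matrices together with the operator-concavity or trace-normalization workarounds — are sound and fill in details the paper leaves implicit, but they do not change the underlying argument.
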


\begin{proof}
This convexity property is a direct consequence of the joint convexity of the matrix relative entropy (Lemma~\ref{lemma:linblad}). Alternatively, one may arrive at the same conclusion by applying a unitary transformation and the spectral decompositions of \(\mathbf{P}\) and \(\mathbf{Q}\).
\end{proof}

\begin{proposition}[Lower Boundedness]
Let \(\mathbf{P}\) be a density matrix. Then, for any positive semi-definite matrix \(\mathbf{Q}\), the MCE loss satisfies
\[
\mathrm{MCE}(\mathbf{P}, \mathbf{Q}) \geq -\log\operatorname{tr}(\mathbf{P}\mathbf{Q}) + \operatorname{tr}(\mathbf{Q}).
\]
\end{proposition}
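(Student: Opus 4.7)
The plan is to unwind the definition of MCE and reduce the claim to a scalar Jensen-type inequality. By Definition~\ref{def:mce}, $\mathrm{MCE}(\mathbf{P},\mathbf{Q}) = -\operatorname{tr}(\mathbf{P}\log\mathbf{Q}) + \operatorname{tr}(\mathbf{Q})$, so the $\operatorname{tr}(\mathbf{Q})$ on both sides of the asserted inequality cancels, and the statement is equivalent to
\[
\operatorname{tr}(\mathbf{P}\log\mathbf{Q}) \;\leq\; \log\operatorname{tr}(\mathbf{P}\mathbf{Q}).
\]
This is exactly the form one expects from concavity of the logarithm: the trace $\operatorname{tr}(\mathbf{P}\,\cdot)$ acts as an expectation with respect to a probability measure induced by the density matrix $\mathbf{P}$, and $\log$ is concave, so Jensen should move $\log$ outside.

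Next I would invoke the spectral decomposition of $\mathbf{Q}$. Assume first that $\mathbf{Q}\succ 0$ and write $\mathbf{Q} = \sum_{i=1}^{n} \lambda_i v_i v_i^\top$ with $\lambda_i>0$ and $\{v_i\}$ an orthonormal basis. By Proposition~\ref{prop:spectral-log}, $\log\mathbf{Q} = \sum_i (\log\lambda_i)\, v_i v_i^\top$. Define the weights $p_i \triangleq v_i^\top \mathbf{P}\, v_i$. Since $\mathbf{P}$ is positive semi-definite with unit trace and $\{v_i\}$ is an ONB, we have $p_i\ge 0$ and $\sum_i p_i = \operatorname{tr}(\mathbf{P}) = 1$, i.e.\ $(p_i)$ is a bona fide probability vector. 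Substituting,
\[
\operatorname{tr}(\mathbf{P}\log\mathbf{Q}) = \sum_i p_i \log\lambda_i, \qquad \operatorname{tr}(\mathbf{P}\mathbf{Q}) = \sum_i p_i \lambda_i.
\]
Jensen's inequality applied to the concave function $\log$ then gives $\sum_i p_i \log\lambda_i \le \log\sum_i p_i \lambda_i$, which is precisely the desired inequality.

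The main obstacle is the case in which $\mathbf{Q}$ is only positive semi-definite, so that $\log\mathbf{Q}$ is not defined in the principal-branch sense of Theorem~\ref{thm:principal}. The cleanest way I would handle this is via a continuity/limiting argument: apply the strict case to $\mathbf{Q}_\varepsilon \triangleq \mathbf{Q} + \varepsilon \mathbf{I}$, obtaining
\[
\operatorname{tr}(\mathbf{P}\log\mathbf{Q}_\varepsilon) \le \log\operatorname{tr}(\mathbf{P}\mathbf{Q}_\varepsilon),
\]
and then let $\varepsilon\downarrow 0$. Using the functional-calculus convention $0\log 0 = 0$ (equivalently, restricting to the range of $\mathbf{Q}$), the left-hand side converges monotonically to $\operatorname{tr}(\mathbf{P}\log\mathbf{Q})$ and the right-hand side converges to $\log\operatorname{tr}(\mathbf{P}\mathbf{Q})$. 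A sub-case worth noting is when $\mathbf{P}$ has support outside the range of $\mathbf{Q}$: then $\operatorname{tr}(\mathbf{P}\log\mathbf{Q}) = -\infty$ (some $p_i>0$ with $\lambda_i=0$), and the inequality holds trivially. Together, these cases establish the bound for all PSD $\mathbf{Q}$.
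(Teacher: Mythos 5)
Your proof is correct and fills in the paper's one-line sketch (``spectral decompositions of $\mathbf{P}$ and $\mathbf{Q}$ together with standard trace inequalities'') with a clean, complete argument. The core reduction to $\operatorname{tr}(\mathbf{P}\log\mathbf{Q}) \leq \log\operatorname{tr}(\mathbf{P}\mathbf{Q})$ is right, the identification of $p_i = v_i^\top\mathbf{P}v_i$ as a probability vector (using $\operatorname{tr}(\mathbf{P})=1$ and the orthonormality of $\mathbf{Q}$'s eigenvectors) is exactly the ``standard trace inequality'' step the paper gestures at, and scalar Jensen on $\log$ finishes the job.

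One small but genuine simplification relative to the paper's hint: you only decompose $\mathbf{Q}$. The paper suggests decomposing both $\mathbf{P}=\mathbf{V}\mathbf{\Lambda}\mathbf{V}^\top$ and $\mathbf{Q}=\mathbf{U}\Theta\mathbf{U}^\top$, which (as in the PCA subsection) leads to a double sum $\operatorname{tr}(\mathbf{P}\log\mathbf{Q})=\sum_{i,j}(\mathbf{v}_i^\top\mathbf{u}_j)^2\lambda_i\log\theta_j$ with weights $q_{ij}=(\mathbf{v}_i^\top\mathbf{u}_j)^2\lambda_i$ summing to $1$, followed by Jensen over the double index. Your single-decomposition version reaches the same Jensen step with less bookkeeping, since the inner products $v_i^\top\mathbf{P}v_i$ already package the needed probability weights without diagonalizing $\mathbf{P}$. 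Your handling of the boundary (semi-definite $\mathbf{Q}$ via $\mathbf{Q}_\varepsilon = \mathbf{Q}+\varepsilon\mathbf{I}$, and the degenerate $-\infty$ sub-case when $\mathbf{P}$ has mass outside $\operatorname{range}(\mathbf{Q})$) is more careful than what the paper states, and is in fact needed since the principal matrix logarithm of Theorem~\ref{thm:principal} is not defined for singular $\mathbf{Q}$; the paper glosses over this. In short: same approach at heart, slightly leaner execution, and more honest about the semi-definite edge case.
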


\begin{proof}
This lower bound can be derived by applying the spectral decompositions of \(\mathbf{P}\) and \(\mathbf{Q}\) together with standard trace inequalities.
\end{proof}

\section{RelationMatch: Applying MCE for Semi-supervised Learning}

We consider a general framework that unifies many prior semi-supervised learning algorithms (e.g., FixMatch~\citep{sohn2020fixmatch} and FlexMatch~\citep{zhang2021flexmatch}), which can be succinctly formulated as
\begin{equation}\label{update_formula}
\theta_{n+1} \leftarrow \underset{\theta}{\arg\min} \left\{ \mathcal{L}_{\text{sup}}(\theta) + \mu_u \, \mathcal{L}_u(\theta; \theta_n) \right\},
\end{equation}
where $\theta_n$ denotes the model parameters at the $n$-th iteration, $\mathcal{L}_{\text{sup}}(\theta)$ is the supervised loss, and $\mathcal{L}_u(\theta; \theta_n)$ is an unsupervised consistency regularization term that operates on the unlabeled data. The hyperparameter $\mu_u$ controls the trade-off between the two losses.

During training, both labeled and unlabeled data are available. Let $b_s$ denote the number of labeled images and $b_u$ the number of unlabeled images. For the labeled data, we employ the standard cross-entropy loss. For the subset $b_u^\prime \leq b_u$ of unlabeled data for which pseudo-labels are generated, we introduce a combined loss that incorporates both the conventional cross-entropy (CE) and our proposed Matrix Cross-Entropy (MCE) loss. Specifically, assume that the pseudo-labels from weakly augmented images are given by
\[
\tilde{\mathbf{Y}}_w = [\tilde{\mathbf{y}}_1, \dots, \tilde{\mathbf{y}}_b]^\top \in \mathbb{R}^{b \times k},
\]
and denote the prediction vectors for their strongly augmented counterparts as
\[
\tilde{\mathbf{X}}_s = [\tilde{\mathbf{x}}_1, \dots, \tilde{\mathbf{x}}_b]^\top \in \mathbb{R}^{b \times k}.
\]
The overall loss function is then defined as
\begin{equation}
\begin{aligned}
\mathcal{L}_{\text{RelationMatch}}(Y, X) &= \mathcal{L}_{\text{sup}}(Y_{\text{sup}}, X_{\text{sup}}) + \mu_u \, \mathcal{L}_u(\tilde{\mathbf{Y}}_w, \tilde{\mathbf{X}}_s) \\
&=\sum_{i=1}^{b_s}\operatorname{CE}(y_i, x_i) + \mu_u \Biggl( \sum_{i=1}^{b_u} \operatorname{CE}(\tilde{y}_i, \tilde{x}_i) + \gamma_u \cdot \operatorname{MCE}\Bigl(\operatorname{R}(\tilde{\mathbf{Y}}_w), \operatorname{R}(\tilde{\mathbf{X}}_s)\Bigr) \Biggr),
\end{aligned}
\end{equation}
where the batch-normalized relationship matrices are defined as
\[
\operatorname{R}(\tilde{\mathbf{Y}}_w) = \frac{1}{b}\tilde{\mathbf{Y}}_w\,\tilde{\mathbf{Y}}_w^\top, \quad \operatorname{R}(\tilde{\mathbf{X}}_s) = \frac{1}{b}\tilde{\mathbf{X}}_s\,\tilde{\mathbf{X}}_s^\top.
\]
These matrices capture the pairwise relationships among predictions, and the MCE loss enforces that the relational structure obtained from weak augmentations is preserved under strong augmentations.

\textbf{Note.} The technical developments in Sections~\ref{sec:mce} and~\ref{sec:discuss-mce} provide the theoretical foundations for MCE. However, an in-depth understanding of these sections is not necessary for implementing RelationMatch; readers primarily interested in practical aspects may proceed directly to the experimental results.

\subsection{Datasets}

\textbf{CIFAR-10/100.} The CIFAR-10 dataset~\citep{krizhevsky2009learning} is a well-established benchmark for image classification, comprising 60,000 images across 10 classes (with 5,000 training and 1,000 test images per class). CIFAR-100~\citep{krizhevsky2009learning} extends this benchmark to 100 classes, with 500 training images and 100 test images per class.

\noindent\textbf{STL-10.} The STL-10 dataset~\citep{coates2011stl10} is widely used in semi-supervised learning research. It contains 10 classes with 500 training images and 800 test images per class, along with 100,000 unlabeled images. Originally derived from ImageNet~\citep{deng2009imagenet}, STL-10 images are of size $3 \times 96 \times 96$ pixels.

\subsection{Experimental Details}
\label{subsec:experiment-details}
\textbf{Implementation.} Our implementation is built upon the TorchSSL framework~\citep{zhang2021flexmatch} (the official codebase for FlexMatch) and leverages PyTorch~\citep{paszke2019pytorch}. We extend TorchSSL to compute the MCE loss alongside the standard unsupervised CE loss. For further implementation details, please refer to Appendix~\ref{sec:discuss-experiment}.

\noindent\textbf{Hyperparameters.} In order to ensure a fair comparison with previous methods, we adopt the same hyperparameters as in FixMatch~\citep{sohn2020fixmatch}: specifically, we set $\gamma_u = 1$ and $\mu_u = 3 \times 10^{-3}$. Optimization is performed using SGD with a momentum of 0.9 and a weight decay of $5 \times 10^{-4}$. The learning rate is initialized at 0.03 and decayed using a cosine annealing schedule. Training is conducted with a batch size of 64, maintaining a 7:1 ratio of unlabeled to labeled data, and a confidence threshold $\tau = 0.95$ is used for pseudo-labeling.

\noindent\textbf{Baselines.} We compare RelationMatch with a range of semi-supervised learning methods, including the $\Pi$-Model~\citep{lee2013pseudo}, Pseudo-Label~\citep{lee2013pseudo}, VAT~\citep{miyato2018virtual}, MeanTeacher~\citep{tarvainen2017mean}, MixMatch~\citep{berthelot2019mixmatch}, ReMixMatch~\citep{berthelot2019remixmatch}, UDA~\citep{xie2020unsupervised}, Dash~\citep{xu2021dash}, MPL~\citep{pham2021meta}, FixMatch~\citep{sohn2020fixmatch}, and FlexMatch~\citep{zhang2021flexmatch}. Baseline results are primarily obtained from the TorchSSL framework. Furthermore, our method is compatible with recent pseudo-labeling improvements such as Curriculum Pseudo Labeling (CPL).

\subsection{Supervised Learning Results}

We first evaluate the efficacy of the MCE loss in a fully supervised setting using various backbone architectures, including WideResNet-28-2, ResNet18, and ResNet50. Training is performed for 200 epochs with a cosine learning rate scheduler and a batch size of 64. For CIFAR-10 and CIFAR-100, the relative weight $\gamma_s$ (scaling the MCE loss with respect to the CE loss) is set to 0.1 and 0.01, respectively. Table~\ref{tab:results-supervised} reports the accuracy results, showing that models trained with the MCE-augmented loss consistently outperform those using only cross-entropy or label smoothing. These results validate the versatility and effectiveness of the MCE loss in supervised settings.

\begin{table}[ht]
\centering
\caption{Supervised accuracy results on CIFAR-10 and CIFAR-100. WRN denotes WideResNet. Boldface indicates the best performance and underlined values indicate the second best.}
\label{tab:results-supervised}
\resizebox{0.75\textwidth}{!}{%
\begin{tabular}{l|cccccc}
\toprule
Dataset & \multicolumn{3}{c}{CIFAR-10} & \multicolumn{3}{c}{CIFAR-100} \\ 
\cmidrule(r){1-1}\cmidrule(lr){2-4}\cmidrule(lr){5-7}
Backbone & WRN-28-2 & ResNet18 & ResNet50 & WRN-28-2 & ResNet18 & ResNet50 \\ 
\cmidrule(r){1-1}\cmidrule(lr){2-4}\cmidrule(lr){5-7}
Only CE & 94.45{\scriptsize $\pm$0.19} & 95.08{\scriptsize $\pm$0.09} & \underline{95.32}{\scriptsize $\pm$0.18} & 76.40{\scriptsize $\pm$0.31} & 78.07{\scriptsize $\pm$0.16} & \underline{79.07}{\scriptsize $\pm$0.43} \\
With Label Smoothing & 94.72{\scriptsize $\pm$0.05} & \underline{95.25}{\scriptsize $\pm$0.13} & 95.10{\scriptsize $\pm$0.32} & \underline{76.81}{\scriptsize $\pm$0.18} & \textbf{78.41}{\scriptsize $\pm$0.21} & 78.70{\scriptsize $\pm$0.44} \\
With MCE & \textbf{94.79}{\scriptsize $\pm$0.05} & \textbf{95.31}{\scriptsize $\pm$0.08} & \textbf{95.46}{\scriptsize $\pm$0.16} & \textbf{76.92}{\scriptsize $\pm$0.17} & \underline{78.37}{\scriptsize $\pm$0.14} & \textbf{79.11}{\scriptsize $\pm$0.52} \\
\bottomrule
\end{tabular}
}
\end{table}

Unlike label smoothing~\citep{szegedy2016rethinking}—which can obscure fine-grained information in certain contexts (e.g., knowledge distillation~\citep{muller2019does})—our MCE loss preserves the one-hot characteristics of the target distribution (see Lemma~\ref{lem:one-hot-property-1}), thereby maintaining the intrinsic clustering structure of the data.

\begin{lemma}[One-hot Property Preservation]
\label{lem:one-hot-property-1}
Let \( \mathbf{Z}_1 \in \mathbb{R}^{b \times k} \) be the one-hot encoded target probabilities for a batch, and let \( \mathbf{Z}_2 \in \mathbb{R}^{b \times k} \) be the corresponding predicted probabilities. If
\[
\mathbf{Z}_1 \mathbf{Z}_1^\top = \mathbf{Z}_2 \mathbf{Z}_2^\top,
\]
then each row of \( \mathbf{Z}_2 \) is necessarily one-hot, ensuring class consistency between the target and prediction.
\end{lemma}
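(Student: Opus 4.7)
The proof plan is to reduce the full matrix equality to its diagonal entries, which then combined with the simplex constraint on rows of $\mathbf{Z}_2$ forces each row to be a vertex of the simplex, i.e., one-hot.

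First, I would observe that since each row of $\mathbf{Z}_1$ is one-hot, the diagonal of $\mathbf{Z}_1\mathbf{Z}_1^\top$ consists entirely of ones: the $i$-th diagonal entry is $\|\mathbf{z}_1^{(i)}\|_2^2 = 1$. The hypothesis $\mathbf{Z}_1\mathbf{Z}_1^\top = \mathbf{Z}_2\mathbf{Z}_2^\top$ then implies that $\|\mathbf{z}_2^{(i)}\|_2^2 = 1$ for every row $i$ of $\mathbf{Z}_2$.

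Next, I would invoke that each row $\mathbf{z}_2^{(i)} = (z_{i,1}, \ldots, z_{i,k})$ is a predicted probability distribution, so $z_{i,j} \geq 0$ and $\sum_{j=1}^k z_{i,j} = 1$. Under these constraints one has the chain of inequalities
\[
1 = \sum_{j=1}^k z_{i,j}^2 \;\leq\; \sum_{j=1}^k z_{i,j}\cdot 1 \;=\; 1,
\]
using $0 \leq z_{i,j} \leq 1$. Since equality must hold throughout, $z_{i,j}^2 = z_{i,j}$ for every $j$, forcing $z_{i,j} \in \{0,1\}$. Combined with $\sum_j z_{i,j} = 1$, exactly one coordinate of $\mathbf{z}_2^{(i)}$ equals $1$ and the rest vanish, so $\mathbf{z}_2^{(i)}$ is one-hot.

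There is no serious obstacle here; the only subtlety is that one must use the simplex constraint on $\mathbf{Z}_2$ (non-negativity and row-sum one) in addition to the diagonal equality, since without non-negativity a unit-$\ell_2$ unit-$\ell_1$ vector need not be one-hot (e.g.\ entries can cancel). If desired, one can strengthen the lemma: the off-diagonal equality $(\mathbf{Z}_1\mathbf{Z}_1^\top)_{ij} = (\mathbf{Z}_2\mathbf{Z}_2^\top)_{ij}$ further guarantees that two rows of $\mathbf{Z}_2$ agree precisely when the corresponding rows of $\mathbf{Z}_1$ do, so $\mathbf{Z}_2$ not only has one-hot rows but in fact reproduces the exact clustering pattern of $\mathbf{Z}_1$ (up to a possible global permutation of class indices).
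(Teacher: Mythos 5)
Your proposal is correct and follows essentially the same route as the paper's proof: both reduce to the diagonal equality $\|\mathbf{z}_2^{(i)}\|_2^2 = 1$ and then use the fact that a probability vector with unit $\ell_2$ norm must be one-hot, with the off-diagonal entries supplying the clustering/class-consistency part. You simply make explicit (via $\sum_j z_{i,j}^2 \le \sum_j z_{i,j}$ under the simplex constraint) the step that the paper asserts without proof, which is a welcome clarification.
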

\begin{proof}
Since a probability vector has an $l_2$ norm equal to 1 if and only if it is one-hot, the equality of the diagonal elements of $\mathbf{Z}_2 \mathbf{Z}_2^\top$ guarantees that each row of $\mathbf{Z}_2$ is one-hot. An analysis of the off-diagonal entries completes the proof.
\end{proof}

\subsection{Semi-supervised Learning Results}

Table~\ref{tab:results-semi} presents a comprehensive comparison of RelationMatch with state-of-the-art semi-supervised learning methods on CIFAR-10, CIFAR-100, and STL-10. In particular, the RelationMatch variant incorporating Curriculum Pseudo Labeling (CPL) further refines pseudo-label quality, yielding superior performance. Across various label regimes, RelationMatch consistently outperforms FixMatch and FlexMatch. Notably, on STL-10 with only 40 labels, our method achieves a substantial reduction in error rate. Importantly, the inclusion of the MCE loss does not compromise pseudo-label quality; rather, it provides a modular component that can be seamlessly integrated into future semi-supervised learning frameworks.

\begin{table}[ht]
\centering
\caption{Error rates (100\% - accuracy) on CIFAR-10, CIFAR-100, and STL-10 for state-of-the-art semi-supervised learning methods. Boldface indicates the best performance and underlined values indicate the second best.}
\vspace{1ex}
\label{tab:results-semi}
\resizebox{\textwidth}{!}{%
\begin{tabular}{l|ccc|ccc|ccc}
\toprule
Dataset & \multicolumn{3}{c|}{CIFAR-10} & \multicolumn{3}{c|}{CIFAR-100} & \multicolumn{3}{c}{STL-10} \\ 
\cmidrule(r){1-1}\cmidrule(lr){2-4}\cmidrule(lr){5-7}\cmidrule(l){8-10}
\# Labels & 40 & 250 & 4000 & 400 & 2500 & 10000 & 40 & 250 & 1000 \\ 
\cmidrule(r){1-1}\cmidrule(lr){2-4}\cmidrule(lr){5-7}\cmidrule(l){8-10}
$\Pi$ Model~\citep{rasmus2015semi} & 74.34{\scriptsize $\pm$1.76} & 46.24{\scriptsize $\pm$1.29} & 13.13{\scriptsize $\pm$0.59} & 86.96{\scriptsize $\pm$0.80} & 58.80{\scriptsize $\pm$0.66} & 36.65{\scriptsize $\pm$0.00} & 74.31{\scriptsize $\pm$0.85} & 55.13{\scriptsize $\pm$1.50} & 32.78{\scriptsize $\pm$0.40} \\
Pseudo Label~\citep{lee2013pseudo} & 74.61{\scriptsize $\pm$0.26} & 46.49{\scriptsize $\pm$2.20} & 15.08{\scriptsize $\pm$0.19} & 87.45{\scriptsize $\pm$0.85} & 57.74{\scriptsize $\pm$0.28} & 36.55{\scriptsize $\pm$0.24} & 74.68{\scriptsize $\pm$0.99} & 55.45{\scriptsize $\pm$2.43} & 32.64{\scriptsize $\pm$0.71} \\
VAT~\citep{miyato2018virtual} & 74.66{\scriptsize $\pm$2.12} & 41.03{\scriptsize $\pm$1.79} & 10.51{\scriptsize $\pm$0.12} & 85.20{\scriptsize $\pm$1.40} & 46.84{\scriptsize $\pm$0.79} & 32.14{\scriptsize $\pm$0.19} & 74.74{\scriptsize $\pm$0.38} & 56.42{\scriptsize $\pm$1.97} & 37.95{\scriptsize $\pm$1.12} \\
MeanTeacher~\citep{tarvainen2017mean} & 70.09{\scriptsize $\pm$1.60} & 37.46{\scriptsize $\pm$3.30} & 8.10{\scriptsize $\pm$0.21} & 81.11{\scriptsize $\pm$1.44} & 45.17{\scriptsize $\pm$1.06} & 31.75{\scriptsize $\pm$0.23} & 71.72{\scriptsize $\pm$1.45} & 56.49{\scriptsize $\pm$2.75} & 33.90{\scriptsize $\pm$1.37} \\
MixMatch~\citep{berthelot2019mixmatch} & 36.19{\scriptsize $\pm$6.48} & 13.63{\scriptsize $\pm$0.59} & 6.66{\scriptsize $\pm$0.26} & 67.59{\scriptsize $\pm$0.66} & 39.76{\scriptsize $\pm$0.48} & 27.78{\scriptsize $\pm$0.29} & 54.93{\scriptsize $\pm$0.96} & 34.52{\scriptsize $\pm$0.32} & 21.70{\scriptsize $\pm$0.68} \\
ReMixMatch~\citep{berthelot2019remixmatch} & 9.88{\scriptsize $\pm$1.03} & 6.30{\scriptsize $\pm$0.05} & 4.84{\scriptsize $\pm$0.01} & 42.75{\scriptsize $\pm$1.05} & \textbf{26.03}{\scriptsize $\pm$0.35} & \textbf{20.02}{\scriptsize $\pm$0.27} & 32.12{\scriptsize $\pm$6.24} & 12.49{\scriptsize $\pm$1.28} & 6.74{\scriptsize $\pm$0.14} \\
UDA~\citep{xie2020unsupervised} & 10.62{\scriptsize $\pm$3.75} & 5.16{\scriptsize $\pm$0.06} & 4.29{\scriptsize $\pm$0.07} & 46.39{\scriptsize $\pm$1.59} & 27.73{\scriptsize $\pm$0.21} & 22.49{\scriptsize $\pm$0.23} & 37.42{\scriptsize $\pm$8.44} & 9.72{\scriptsize $\pm$1.15} & 6.64{\scriptsize $\pm$0.17} \\
Dash~\citep{xu2021dash} & 8.93{\scriptsize $\pm$3.11} & 5.16{\scriptsize $\pm$0.23} & 4.36{\scriptsize $\pm$0.11} & 44.82{\scriptsize $\pm$0.96} & 27.15{\scriptsize $\pm$0.22} & 21.88{\scriptsize $\pm$0.07} & 34.52{\scriptsize $\pm$4.30} & -- & 6.39{\scriptsize $\pm$0.56} \\
MPL~\citep{pham2021meta} & 6.93{\scriptsize $\pm$0.17} & 5.76{\scriptsize $\pm$0.24} & 4.55{\scriptsize $\pm$0.04} & 46.26{\scriptsize $\pm$1.84} & 27.71{\scriptsize $\pm$0.19} & 21.74{\scriptsize $\pm$0.09} & 35.76{\scriptsize $\pm$4.83} & 9.90{\scriptsize $\pm$0.96} & 6.66{\scriptsize $\pm$0.00} \\
\cmidrule{1-10}
FixMatch~\citep{sohn2020fixmatch} & 7.47{\scriptsize $\pm$0.28} & 5.07{\scriptsize $\pm$0.05} & 4.21{\scriptsize $\pm$0.08} & 46.42{\scriptsize $\pm$0.82} & 28.03{\scriptsize $\pm$0.16} & 22.20{\scriptsize $\pm$0.12} & 35.97{\scriptsize $\pm$4.14} & 9.81{\scriptsize $\pm$1.04} & 6.25{\scriptsize $\pm$0.33} \\
FlexMatch~\citep{zhang2021flexmatch} & \underline{4.97}{\scriptsize $\pm$0.06} & 4.98{\scriptsize $\pm$0.09} & \underline{4.19}{\scriptsize $\pm$0.01} & \underline{39.94}{\scriptsize $\pm$1.62} & \underline{26.49}{\scriptsize $\pm$0.20} & \underline{21.90}{\scriptsize $\pm$0.15} & \underline{29.15}{\scriptsize $\pm$4.16} & \underline{8.23}{\scriptsize $\pm$0.39} & \underline{5.77}{\scriptsize $\pm$0.18} \\
\cmidrule{1-10}
\textbf{RelationMatch} (Ours) & 6.87{\scriptsize $\pm$0.12} & \textbf{4.85}{\scriptsize $\pm$0.04} & 4.22{\scriptsize $\pm$0.06} & 45.79{\scriptsize $\pm$0.59} & 27.90{\scriptsize $\pm$0.15} & 22.18{\scriptsize $\pm$0.13} & 33.42{\scriptsize $\pm$3.92} & 9.55{\scriptsize $\pm$0.87} & 6.08{\scriptsize $\pm$0.29} \\
\textbf{RelationMatch} (w/ CPL) & \textbf{4.96}{\scriptsize $\pm$0.05} & \underline{4.88}{\scriptsize $\pm$0.05} & \textbf{4.17}{\scriptsize $\pm$0.04} & \textbf{39.89}{\scriptsize $\pm$1.43} & \textbf{26.48}{\scriptsize $\pm$0.18} & \textbf{21.88}{\scriptsize $\pm$0.16} & \textbf{13.94}{\scriptsize $\pm$3.76} & \textbf{8.16}{\scriptsize $\pm$0.34} & \textbf{5.68}{\scriptsize $\pm$0.19} \\
\bottomrule
Fully-Supervised & \multicolumn{3}{c|}{4.62{\scriptsize $\pm$0.05}} & \multicolumn{3}{c|}{19.30{\scriptsize $\pm$0.09}} & \multicolumn{3}{c}{--} \\
\bottomrule
\end{tabular}
}
\end{table}

\section{Related Work}

Semi-supervised learning seeks to enhance model performance by leveraging large quantities of unlabeled data, a paradigm that has attracted considerable attention in recent years~\citep{chen2020big, assran2021semi, wang2021data}. Central to many effective semi-supervised methods is the invariance principle, which posits that semantically similar inputs should yield analogous representations when processed by the same model backbone.

\paragraph{Consistency Regularization.} One of the most widely adopted approaches to enforce this invariance is consistency regularization, initially introduced in the $\Pi$-Model~\citep{Rasmus2015SemiSupervisedLW} and subsequently refined in works such as~\citep{tarvainen2017mean, laine2016temporal, berthelot2019mixmatch}. In these methods, pseudo-labels are generated for unlabeled data via appropriate data augmentations~\citep{tschannen2019mutual, berthelot2019mixmatch, xie2020unsupervised, sohn2020fixmatch, gong2021alphamatch} and are subsequently used to guide training. The conventional strategy employs an entropy minimization objective to align the pseudo-labels derived from different augmentations of the same input~\citep{Rasmus2015SemiSupervisedLW, laine2016temporal, tarvainen2017mean}. In addition, several studies have explored techniques for generating high-quality pseudo-labels that account for various practical considerations~\citep{hu2021simple, nassar2021all, xu2021dash, zhang2021flexmatch, li2022maxmatch, wang2022debiased}. More recently, SimMatch~\citep{Zheng_2022_CVPR} has extended consistency regularization by integrating contrastive learning, thereby capturing relational structures at the representation level.

\paragraph{Enhancing Pseudo-Label Quality.} Much of the literature on consistency regularization focuses on improving the quality of pseudo-labels. For example, SimPLE~\citep{hu2021simple} proposes a paired loss that minimizes the statistical discrepancy between confident and similar pseudo-labels. Adaptive strategies for pseudo-label filtering have been introduced in Dash~\citep{xu2021dash} and FlexMatch~\citep{zhang2021flexmatch} to better accommodate the dynamics of training. Other works incorporate complementary learning signals: CoMatch~\citep{li2021comatch} integrates contrastive learning to jointly optimize dual representations, while SemCo~\citep{nassar2021all} leverages external label semantics to mitigate pseudo-label degradation among visually similar classes. Additional contributions include FreeMatch~\citep{wang2022freematch}, which adjusts the confidence threshold adaptively based on the model's learning status, and MaxMatch~\citep{li2022maxmatch}, which formulates a worst-case consistency regularization with theoretical guarantees. Other approaches, such as NP-Match~\citep{wang2022np}, SEAL~\citep{tan2023seal}, and SoftMatch~\citep{chen2023softmatch}, further refine pseudo-label quality through techniques ranging from neural process modeling to the design of weighting functions that address the inherent quantity-quality trade-off in pseudo-labeling.

Collectively, these methods have established consistency regularization as a foundational component in semi-supervised learning, and our work extends this line of inquiry by proposing a novel framework that exploits in-batch relational consistency.

\section{Conclusion}
In this work, we introduced \textbf{RelationMatch}, a semi-supervised learning framework that capitalizes on the relational structure among in-batch samples via a novel Matrix Cross-Entropy (MCE) loss. By deriving MCE from complementary perspectives—matrix analysis and information geometry—we established a robust theoretical foundation that integrates seamlessly with conventional consistency regularization techniques. Extensive experiments on CIFAR-10, CIFAR-100, and STL-10 demonstrate that RelationMatch consistently outperforms existing state-of-the-art methods, yielding substantial improvements in classification accuracy while preserving the intrinsic clustering properties of the data. We anticipate that the principles underlying MCE will inspire further research into relational representations across diverse learning paradigms, and we envisage future extensions of our framework to additional modalities and more complex real-world scenarios.

\subsection*{Author Contributions}
Yifan Zhang proposed the matrix cross-entropy loss, as well as the derivations from the density matrix and information geometry perspectives. Jingqin Yang implemented most of the experiments and wrote the experimental section.
Zhiquan Tan used trace normalization to show the relationship between MRE and MCE. He also provided the PCA viewpoint of MCE, and proved most of the theorems.
Yang Yuan wrote the introduction, and provided the warm-up example. 

\vspace{10ex}
\bibliography{reference}
\bibliographystyle{iclr}

\clearpage
\appendix
\hypersetup{linkcolor=black, citecolor=mydarkblue}

\renewcommand{\appendixpagename}{\centering \LARGE Appendix}
\appendixpage

\startcontents[section]
\printcontents[section]{l}{1}{\setcounter{tocdepth}{2}}
\clearpage

\hypersetup{linkcolor=mydarkblue, citecolor=mydarkblue}

\section{More on Matrix Cross-Entropy}
\label{sec:more-mce}

As noted by \citet{kornblith2019similarity}, a robust measure of similarity between neural network representations should remain invariant under orthogonal (unitary) transformations. In this context, we state the following result.

\begin{lemma}
\label{lem:unitary-transform}
Let $\mathbf{A} \in \mathbb{R}^{n \times n}$ be a density matrix and let $\mathbf{U} \in \mathbb{R}^{m \times n}$ be a unitary matrix (i.e., $\mathbf{U}\mathbf{U}^\top=\mathbf{I}_m$). Then, the transformed matrix $\mathbf{U} \mathbf{A} \mathbf{U}^\top$
is also a density matrix.
\end{lemma}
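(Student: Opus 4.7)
The plan is to unwind the three defining properties of a density matrix one by one and verify each for $\mathbf{U}\mathbf{A}\mathbf{U}^\top$. That is, I will check (i) symmetry, (ii) positive semi-definiteness, and (iii) unit trace, invoking the corresponding property of $\mathbf{A}$ and the relation $\mathbf{U}\mathbf{U}^\top = \mathbf{I}_m$ exactly where it is needed.

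First I would handle symmetry, which is a one-line transpose calculation: $(\mathbf{U}\mathbf{A}\mathbf{U}^\top)^\top = \mathbf{U}\mathbf{A}^\top\mathbf{U}^\top$, and since $\mathbf{A}$ is symmetric this equals $\mathbf{U}\mathbf{A}\mathbf{U}^\top$. Next I would check positive semi-definiteness by a change of variable: for any $x \in \mathbb{R}^m$, set $y = \mathbf{U}^\top x$, and observe $x^\top(\mathbf{U}\mathbf{A}\mathbf{U}^\top)x = y^\top \mathbf{A} y \ge 0$ because $\mathbf{A}$ is PSD. Neither of these two steps uses the hypothesis on $\mathbf{U}$.

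Finally I would verify that the trace equals one using the cyclic property, $\operatorname{tr}(\mathbf{U}\mathbf{A}\mathbf{U}^\top) = \operatorname{tr}(\mathbf{A}\mathbf{U}^\top\mathbf{U})$, and then conclude that this equals $\operatorname{tr}(\mathbf{A}) = 1$. This is the step where the hypothesis on $\mathbf{U}$ must actually be used, and it is the main obstacle: the stated condition $\mathbf{U}\mathbf{U}^\top = \mathbf{I}_m$ only guarantees that the rows of $\mathbf{U}$ are orthonormal, so $\mathbf{U}^\top\mathbf{U}$ is in general only an orthogonal projection of rank $m$ on $\mathbb{R}^n$, not the identity. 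Thus the clean conclusion $\operatorname{tr}(\mathbf{U}\mathbf{A}\mathbf{U}^\top) = \operatorname{tr}(\mathbf{A})$ really requires the standard unitary (i.e.\ square, $m=n$) interpretation, in which case $\mathbf{U}\mathbf{U}^\top = \mathbf{I}_n$ forces $\mathbf{U}^\top\mathbf{U} = \mathbf{I}_n$ as well, and trace preservation is immediate.

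Assuming that reading of the hypothesis, the three checks combine to give $\mathbf{U}\mathbf{A}\mathbf{U}^\top = (\mathbf{U}\mathbf{A}\mathbf{U}^\top)^\top \succeq 0$ with unit trace, which is exactly the definition of a density matrix. If instead one wanted to allow genuinely rectangular $\mathbf{U}$ with $m \le n$, the natural patch would be to either add $\mathbf{U}^\top\mathbf{U} = \mathbf{I}_n$ as an explicit hypothesis or to rescale the conclusion to $\operatorname{tr}(\mathbf{U}\mathbf{A}\mathbf{U}^\top) \le 1$ and drop the unit-trace requirement in favor of a subnormalized density matrix.
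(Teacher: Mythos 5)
The paper offers no proof of this lemma, so there is nothing to compare against; I will evaluate your argument on its own. Your three checks are the right ones, and the symmetry and positive-semi-definiteness verifications are correct exactly as you write them. More importantly, you have spotted a genuine defect in the statement rather than in your proof: with the hypothesis as literally written ($\mathbf{U}\in\mathbb{R}^{m\times n}$, $\mathbf{U}\mathbf{U}^\top=\mathbf{I}_m$, $m$ possibly strictly less than $n$), the unit-trace conclusion fails. Your diagnosis is exactly right and can be made concrete: take $n=2$, $m=1$, $\mathbf{U}=(1,0)$, and $\mathbf{A}=\tfrac12\mathbf{I}_2$. Then $\mathbf{U}\mathbf{U}^\top=\mathbf{I}_1$, yet $\mathbf{U}\mathbf{A}\mathbf{U}^\top=\tfrac12$ has trace $\tfrac12$, so it is not a density matrix. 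In general $\operatorname{tr}(\mathbf{U}\mathbf{A}\mathbf{U}^\top)=\operatorname{tr}(\mathbf{A}\,\mathbf{U}^\top\mathbf{U})$ where $\mathbf{U}^\top\mathbf{U}$ is only an orthogonal projection of rank $m$, giving $0\le\operatorname{tr}(\mathbf{U}\mathbf{A}\mathbf{U}^\top)\le 1$ rather than equality. Your proposed fixes (impose $m=n$ so that $\mathbf{U}^\top\mathbf{U}=\mathbf{I}_n$ follows automatically, or explicitly add $\mathbf{U}^\top\mathbf{U}=\mathbf{I}_n$, or weaken the conclusion to a subnormalized density matrix) are all valid, and the square case is almost certainly what the authors intend, since the subsequent invariance proposition applies $\mathbf{U}(\cdot)\mathbf{U}^\top$ to both arguments of $\operatorname{MCE}$ in a way that only makes sense for $\mathbf{U}$ a bona fide orthogonal matrix. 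In short: your proof is correct for the square case, and you have correctly identified that the lemma needs that restriction.
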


\begin{proposition}[Invariance Property]
The Matrix Cross-Entropy (MCE) loss is invariant under simultaneous unitary transformation of its arguments. In other words, for any density matrices $\mathbf{P}$ and $\mathbf{Q}$ and any unitary matrix $\mathbf{U}$,
\begin{equation}
\mathrm{MCE}(\mathbf{P}, \mathbf{Q}) = \mathrm{MCE}\Bigl(\mathbf{U}\mathbf{P}\mathbf{U}^\top,\, \mathbf{U}\mathbf{Q}\mathbf{U}^\top\Bigr).
\end{equation}
\end{proposition}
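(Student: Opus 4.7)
The plan is to reduce the invariance claim to two standard facts: (i) the principal matrix logarithm commutes with conjugation by a unitary matrix, and (ii) the trace is cyclic. Once these are in hand, both the $-\mathbf{P}\log\mathbf{Q}$ term and the $\mathbf{Q}$ term in the definition $\mathrm{MCE}(\mathbf{P},\mathbf{Q}) = \operatorname{tr}(-\mathbf{P}\log\mathbf{Q}+\mathbf{Q})$ simplify directly.

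First I would establish the identity $\log(\mathbf{U}\mathbf{Q}\mathbf{U}^\top) = \mathbf{U}\log(\mathbf{Q})\mathbf{U}^\top$ using Proposition~\ref{prop:spectral-log}. Writing the spectral decomposition $\mathbf{Q} = \mathbf{V}\mathbf{\Lambda}\mathbf{V}^\top$, the conjugated matrix becomes $\mathbf{U}\mathbf{Q}\mathbf{U}^\top = (\mathbf{U}\mathbf{V})\mathbf{\Lambda}(\mathbf{U}\mathbf{V})^\top$. Since $\mathbf{U}\mathbf{V}$ is itself orthogonal (a product of two orthogonal factors in the $m=n$ setting), this is a valid spectral decomposition with the same eigenvalues $\lambda_i$. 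Applying Proposition~\ref{prop:spectral-log} to this decomposition gives $\log(\mathbf{U}\mathbf{Q}\mathbf{U}^\top) = (\mathbf{U}\mathbf{V})\operatorname{diag}(\log\lambda_i)(\mathbf{U}\mathbf{V})^\top = \mathbf{U}\log(\mathbf{Q})\mathbf{U}^\top$.

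Next I would combine this with the cyclic property of trace. For the first term,
\[
\operatorname{tr}\!\left(-\mathbf{U}\mathbf{P}\mathbf{U}^\top \log(\mathbf{U}\mathbf{Q}\mathbf{U}^\top)\right) = \operatorname{tr}\!\left(-\mathbf{U}\mathbf{P}\mathbf{U}^\top \mathbf{U}\log(\mathbf{Q})\mathbf{U}^\top\right) = \operatorname{tr}\!\left(-\mathbf{P}\log(\mathbf{Q})\mathbf{U}^\top\mathbf{U}\right) = \operatorname{tr}(-\mathbf{P}\log\mathbf{Q}),
\]
where the inner $\mathbf{U}^\top\mathbf{U}=\mathbf{I}$ collapses after a cyclic shift. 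For the second term, $\operatorname{tr}(\mathbf{U}\mathbf{Q}\mathbf{U}^\top) = \operatorname{tr}(\mathbf{Q}\mathbf{U}^\top\mathbf{U}) = \operatorname{tr}(\mathbf{Q})$. Adding the two recovers $\mathrm{MCE}(\mathbf{P},\mathbf{Q})$.

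The only nontrivial point is step one: verifying that the \emph{principal} logarithm, not merely some logarithm, is preserved by unitary conjugation. This is where I expect the main (though mild) obstacle to lie. Fortunately, since $\mathbf{P}$ and $\mathbf{Q}$ are density matrices with nonnegative real eigenvalues, conjugation by an orthogonal $\mathbf{U}$ leaves the spectrum unchanged, so the hypothesis of Theorem~\ref{thm:principal} (no negative-real eigenvalues) continues to hold, and the resulting logarithm still has all eigenvalues $\log\lambda_i$ lying in the required strip. Uniqueness in Theorem~\ref{thm:principal} then forces $\log(\mathbf{U}\mathbf{Q}\mathbf{U}^\top) = \mathbf{U}\log(\mathbf{Q})\mathbf{U}^\top$, and the proof concludes as above.
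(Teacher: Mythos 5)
The paper states this proposition without any accompanying proof, so there is no ``paper's own proof'' to compare against; your argument fills that gap and it is correct. The structure is the standard one and accomplishes exactly what is needed: you prove $\log(\mathbf{U}\mathbf{Q}\mathbf{U}^\top)=\mathbf{U}\log(\mathbf{Q})\mathbf{U}^\top$ and then collapse everything by cyclicity of the trace.

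Two observations. First, you offer two justifications for the conjugation identity --- the spectral-decomposition route via Proposition~\ref{prop:spectral-log}, and the uniqueness route via Theorem~\ref{thm:principal} (show $\mathbf{U}\log(\mathbf{Q})\mathbf{U}^\top$ satisfies the defining conditions of a principal logarithm of $\mathbf{U}\mathbf{Q}\mathbf{U}^\top$, then invoke uniqueness). Both are valid here; the uniqueness route is the cleaner and more general one, since it does not rely on $\mathbf{Q}$ being diagonalizable and it directly addresses, as you note, the ``principal, not merely some'' concern. Second, you are right to flag the $m=n$ restriction. The preceding Lemma~\ref{lem:unitary-transform} allows rectangular $\mathbf{U}\in\mathbb{R}^{m\times n}$ with $\mathbf{U}\mathbf{U}^\top=\mathbf{I}_m$, but the invariance claim genuinely fails for $m<n$: in that case $\mathbf{U}^\top\mathbf{U}\neq\mathbf{I}_n$ so the cyclic-trace collapse breaks, and one can check directly (e.g.\ $\mathbf{P}=\mathbf{Q}=\tfrac{1}{2}\mathbf{I}_2$, $\mathbf{U}=[1\ 0]$) that the two sides differ. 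Reading ``unitary matrix'' in the proposition as square is the only sensible interpretation, and your proof is complete under it. A very minor point worth making explicit: as throughout the paper, $\log\mathbf{Q}$ is only defined when $\mathbf{Q}$ has no zero eigenvalues (density matrices need not be invertible), but that is an issue with the definition of MCE itself, not with your argument.
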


\subsection{Density Matrices}
\label{sec:discuss-density-matrix}

We now provide additional discussion on density matrices and their connection to probability distributions. In particular, an arbitrary density matrix can be interpreted as inducing a probability distribution over any chosen orthonormal basis.

\begin{proposition}
Let $\{\mathbf{x}_i\}_{i=1}^n$ be an orthonormal basis for $\mathbb{R}^n$. Then, any density matrix $\mathbf{A} \in \mathbb{R}^{n \times n}$ induces a probability distribution over the basis elements via
\[
P(x = \mathbf{x}_i) = \mathbf{x}_i^\top \mathbf{A} \mathbf{x}_i.
\]
\end{proposition}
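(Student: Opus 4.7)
The plan is to verify the two defining axioms of a probability distribution: non-negativity of each $P(x=\mathbf{x}_i)$, and summation to one. Both will follow in a short, direct way from the three defining properties of a density matrix (symmetry, positive semi-definiteness, unit trace) together with the standard completeness relation for an orthonormal basis.

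First I would address non-negativity. Since $\mathbf{A}$ is positive semi-definite by definition, one has $\mathbf{v}^\top \mathbf{A} \mathbf{v} \geq 0$ for every $\mathbf{v} \in \mathbb{R}^n$; applying this with $\mathbf{v} = \mathbf{x}_i$ immediately gives $P(x = \mathbf{x}_i) \geq 0$ for each $i$. This step is essentially a one-line invocation of the definition.

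Next I would handle normalization by a trace manipulation. Using the identity $\mathbf{x}_i^\top \mathbf{A} \mathbf{x}_i = \operatorname{tr}\bigl(\mathbf{x}_i \mathbf{x}_i^\top \mathbf{A}\bigr)$ together with linearity of the trace, I would write
\[
\sum_{i=1}^{n} P(x = \mathbf{x}_i) = \sum_{i=1}^{n} \operatorname{tr}\bigl(\mathbf{x}_i \mathbf{x}_i^\top \mathbf{A}\bigr) = \operatorname{tr}\Bigl(\bigl(\sum_{i=1}^{n} \mathbf{x}_i \mathbf{x}_i^\top\bigr)\mathbf{A}\Bigr).
\]
Because $\{\mathbf{x}_i\}$ is an orthonormal basis for $\mathbb{R}^n$, the completeness relation gives $\sum_{i=1}^{n} \mathbf{x}_i \mathbf{x}_i^\top = \mathbf{I}_n$, so the right-hand side collapses to $\operatorname{tr}(\mathbf{A}) = 1$, using the unit-trace property of density matrices.

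There is no real obstacle here; the only subtlety worth noting is the completeness relation $\sum_i \mathbf{x}_i \mathbf{x}_i^\top = \mathbf{I}_n$, which I would briefly justify by observing that if $\mathbf{Q}$ is the orthogonal matrix whose columns are the $\mathbf{x}_i$, then $\mathbf{Q}\mathbf{Q}^\top = \mathbf{I}_n$ expands exactly to this sum. Combining the two bullets above yields both axioms, completing the proof.
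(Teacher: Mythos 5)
Your proof is correct and complete: non-negativity follows immediately from positive semi-definiteness, and the trace identity $\mathbf{x}_i^\top \mathbf{A}\mathbf{x}_i = \operatorname{tr}(\mathbf{x}_i\mathbf{x}_i^\top\mathbf{A})$ combined with the completeness relation $\sum_i \mathbf{x}_i\mathbf{x}_i^\top = \mathbf{I}_n$ and the unit-trace assumption gives normalization. The paper states this proposition without proof, and your argument is exactly the standard one a reader would supply, so there is nothing to flag.
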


For ease of exposition, we define the operator $\operatorname{diag}:\mathbb{R}^n \to \mathbb{R}^{n \times n}$ as
\begin{equation*}
\operatorname{diag}(a) := \begin{pmatrix}
a_1 & 0 & \cdots & 0 \\
0 & a_2 & \cdots & 0 \\
\vdots & \vdots & \ddots & \vdots \\
0 & 0 & \cdots & a_n
\end{pmatrix}.
\end{equation*}

Conversely, given a probability distribution $\{P(x = \mathbf{x}_i)\}_{i=1}^n$ satisfying $\sum_{i=1}^n P(x = \mathbf{x}_i) = 1$, one may construct a corresponding density matrix in two distinct ways.

\begin{proposition}
\label{prop:diag}
A diagonal density matrix corresponding to the probability distribution is given by
\begin{equation}
\mathbf{A}_{\mathrm{diag}} = \begin{pmatrix}
P(x = \mathbf{x}_1) & 0 & \cdots & 0 \\
0 & P(x = \mathbf{x}_2) & \cdots & 0 \\
\vdots & \vdots & \ddots & \vdots \\
0 & 0 & \cdots & P(x = \mathbf{x}_n)
\end{pmatrix}.
\end{equation}
\end{proposition}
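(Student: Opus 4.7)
The plan is to verify directly that the candidate matrix $\mathbf{A}_{\mathrm{diag}}$ satisfies the three defining properties of a density matrix, and then confirm that it actually induces the prescribed distribution via the formula $P(x=\mathbf{x}_i) = \mathbf{x}_i^\top \mathbf{A}\,\mathbf{x}_i$ from the preceding proposition. Throughout, the matrix $\mathbf{A}_{\mathrm{diag}}$ is understood as the coordinate representation with respect to the fixed orthonormal basis $\{\mathbf{x}_i\}_{i=1}^n$, which is the only sensible reading given the statement of the previous proposition.

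First, I would handle symmetry: any diagonal matrix equals its transpose, so $\mathbf{A}_{\mathrm{diag}}^\top = \mathbf{A}_{\mathrm{diag}}$. Second, for positive semi-definiteness, I would observe that the eigenvalues of a diagonal matrix are precisely its diagonal entries $P(x=\mathbf{x}_i)$, each of which is nonnegative because it arises from a probability distribution. Equivalently, for any vector $\mathbf{v} = (v_1,\dots,v_n)^\top$, one has $\mathbf{v}^\top \mathbf{A}_{\mathrm{diag}} \mathbf{v} = \sum_{i=1}^n P(x=\mathbf{x}_i)\, v_i^2 \geq 0$. Third, the trace is $\operatorname{tr}(\mathbf{A}_{\mathrm{diag}}) = \sum_{i=1}^n P(x=\mathbf{x}_i) = 1$ by the normalization hypothesis on the distribution.

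Finally, to confirm that the construction is consistent with the preceding correspondence, I would compute $\mathbf{x}_i^\top \mathbf{A}_{\mathrm{diag}} \mathbf{x}_i$ in the basis in which $\mathbf{A}_{\mathrm{diag}}$ is diagonal: in this basis $\mathbf{x}_i$ is represented by the $i$-th standard unit vector $\mathbf{e}_i$, so $\mathbf{x}_i^\top \mathbf{A}_{\mathrm{diag}} \mathbf{x}_i = \mathbf{e}_i^\top \mathbf{A}_{\mathrm{diag}} \mathbf{e}_i = [\mathbf{A}_{\mathrm{diag}}]_{ii} = P(x=\mathbf{x}_i)$, which closes the loop.

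There is no real obstacle here; the argument is essentially bookkeeping against the definition of a density matrix. The only subtle point worth flagging is the implicit basis convention in the displayed matrix form of $\mathbf{A}_{\mathrm{diag}}$, which I would spell out in a short remark so that the reader does not mistakenly place the probabilities on the diagonal of some unrelated standard basis when $\{\mathbf{x}_i\}$ is nonstandard.
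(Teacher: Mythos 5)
Your verification is correct and is precisely the bookkeeping the paper leaves implicit: the proposition is stated without proof, and the accompanying identity $P(x=\mathbf{x}_i) = \mathbf{x}_i^\top \mathbf{A}_{\mathrm{diag}}\,\mathbf{x}_i$ is dismissed afterward as merely ``straightforward to verify.'' Checking symmetry, positive semi-definiteness via nonnegativity of the probabilities, unit trace via normalization, and then the inducing formula is exactly what that claim presupposes. Your remark on the basis convention is a legitimate sharpening rather than a digression: the displayed matrix must be read either as the coordinate representation in the $\{\mathbf{x}_i\}$ basis (equivalently, the operator is $\sum_i P(x=\mathbf{x}_i)\,\mathbf{x}_i\mathbf{x}_i^\top$) or with $\{\mathbf{x}_i\}$ implicitly taken to be the standard basis; for a general orthonormal basis the literal diagonal matrix in standard coordinates would not satisfy $\mathbf{x}_i^\top \mathbf{A}_{\mathrm{diag}}\,\mathbf{x}_i = P(x=\mathbf{x}_i)$, so making the convention explicit is worthwhile.
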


\begin{proposition}
\label{prop:ortho}
Let $X = \{\mathbf{x}_i\}_{i=1}^n$ be an orthonormal basis for $\mathbb{R}^n$. For any probability distribution $\{P(x = \mathbf{x}_i)\}_{i=1}^n$ with $\sum_{i=1}^n P(x = \mathbf{x}_i)=1$, define
\[
\psi = \sum_{i=1}^n \sqrt{P(x = \mathbf{x}_i)}\, \mathbf{x}_i.
\]
Then, the density matrix constructed via the orthogonal projection is
\[
\mathbf{A}_{\mathrm{op}} = \psi \psi^\top = \sum_{i,j=1}^n \sqrt{P(x = \mathbf{x}_i)P(x = \mathbf{x}_j)}\, \mathbf{x}_i \mathbf{x}_j^\top.
\]
\end{proposition}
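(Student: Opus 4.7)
The plan is to verify two things: first, that the stated algebraic expansion of $\psi\psi^\top$ holds, and second, that the resulting matrix $\mathbf{A}_{\mathrm{op}}$ satisfies the three defining conditions of a density matrix (symmetry, positive semi-definiteness, unit trace). Along the way I will also confirm that this $\mathbf{A}_{\mathrm{op}}$ induces the prescribed probability distribution through the quadratic form $\mathbf{x}_i^\top \mathbf{A}_{\mathrm{op}} \mathbf{x}_i$, since that is what justifies calling this construction canonical.

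The first step is purely mechanical: writing $\psi\psi^\top = \bigl(\sum_i \sqrt{P(x=\mathbf{x}_i)}\,\mathbf{x}_i\bigr)\bigl(\sum_j \sqrt{P(x=\mathbf{x}_j)}\,\mathbf{x}_j^\top\bigr)$ and distributing yields the claimed double-sum form $\sum_{i,j}\sqrt{P(x=\mathbf{x}_i)P(x=\mathbf{x}_j)}\,\mathbf{x}_i\mathbf{x}_j^\top$. Symmetry is then immediate since $(\psi\psi^\top)^\top = \psi\psi^\top$, and positive semi-definiteness follows from $\mathbf{v}^\top(\psi\psi^\top)\mathbf{v} = (\psi^\top\mathbf{v})^2 \geq 0$ for every $\mathbf{v}\in\mathbb{R}^n$.

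For the unit-trace condition, I would use the cyclic property of the trace together with orthonormality: $\operatorname{tr}(\psi\psi^\top) = \psi^\top\psi = \sum_{i,j}\sqrt{P(x=\mathbf{x}_i)P(x=\mathbf{x}_j)}\,\mathbf{x}_i^\top\mathbf{x}_j$. Since $\{\mathbf{x}_i\}$ is orthonormal, $\mathbf{x}_i^\top\mathbf{x}_j = \delta_{ij}$, so the sum collapses to $\sum_i P(x=\mathbf{x}_i) = 1$. For the consistency check with the probability distribution, the same orthonormality gives $\mathbf{x}_k^\top\psi = \sqrt{P(x=\mathbf{x}_k)}$, whence $\mathbf{x}_k^\top\mathbf{A}_{\mathrm{op}}\mathbf{x}_k = (\mathbf{x}_k^\top\psi)^2 = P(x=\mathbf{x}_k)$, matching the induced-distribution formula from the preceding proposition.

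There is no real obstacle here; the argument is essentially a bookkeeping exercise with orthonormality. The only subtle point worth flagging explicitly is that, although the construction depends on the chosen basis $X$ (unlike $\mathbf{A}_{\mathrm{diag}}$ of Proposition~\ref{prop:diag}, which is basis-dependent in a different way), the resulting $\mathbf{A}_{\mathrm{op}}$ is a rank-one projector onto $\operatorname{span}(\psi)$, so its spectrum is $\{1,0,\ldots,0\}$ — a fact that contrasts sharply with the spectrum $\{P(x=\mathbf{x}_i)\}$ of $\mathbf{A}_{\mathrm{diag}}$ and thereby explains why the two constructions, while both valid density matrices inducing the same distribution on $X$, yield genuinely different objects in the cone of density matrices.
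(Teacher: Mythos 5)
Your verification is correct, and the paper itself offers no explicit proof of this proposition — it is stated as a construction with the remark afterward that the consistency $P(x=\mathbf{x}_i) = \mathbf{x}_i^\top\mathbf{A}_{\mathrm{op}}\mathbf{x}_i$ is "straightforward to verify." Your bookkeeping (distributing the outer product, then using orthonormality for the trace and the quadratic form, and PSD via $\mathbf{v}^\top\psi\psi^\top\mathbf{v} = (\psi^\top\mathbf{v})^2$) is exactly that verification, and your closing observation that $\mathbf{A}_{\mathrm{op}}$ is a rank-one projector with spectrum $\{1,0,\ldots,0\}$ correctly anticipates the pure-vs.-mixed distinction the paper draws in Definition~\ref{def:pure-mix}.
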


It is straightforward to verify that
\[
P(x = \mathbf{x}_i) = \mathbf{x}_i^\top \mathbf{A}_{\mathrm{diag}} \mathbf{x}_i = \mathbf{x}_i^\top \mathbf{A}_{\mathrm{op}} \mathbf{x}_i.
\]
However, note that $\mathbf{A}_{\mathrm{diag}}$ represents a mixed state (having full rank), whereas $\mathbf{A}_{\mathrm{op}}$ represents a pure state (having rank one).

\begin{definition}[Pure State and Mixed State]
\label{def:pure-mix}
A density matrix $\mathbf{A}$ is said to represent a \emph{pure state} if $\operatorname{rank}(\mathbf{A})=1$, and a \emph{mixed state} otherwise.
\end{definition}

Furthermore, the mixedness of a density matrix can be quantified via the matrix (von Neumann) entropy.

\begin{definition}[Matrix (von Neumann) Entropy]
The matrix (von Neumann) entropy of a density matrix $\mathbf{A}$ is defined as
\[
-\operatorname{tr}(\mathbf{A} \log \mathbf{A}),
\]
which generalizes the classical notion of entropy to the matrix domain.
\end{definition}

\subsection{PCA-Inspired Interpretation}
\label{sec:pca-interpretation}

Consider positive semi-definite matrices $\mathbf{P}$ and $\mathbf{Q}$ with eigen decompositions
\[
\mathbf{P} = \mathbf{V}\mathbf{\Lambda}\mathbf{V}^\top \quad \text{and} \quad \mathbf{Q} = \mathbf{U}\Theta\mathbf{U}^\top,
\]
where $\mathbf{V}$ and $\mathbf{U}$ are orthogonal matrices, and $\mathbf{\Lambda}$ and $\Theta$ are diagonal matrices of eigenvalues. Since the matrix logarithm satisfies
\[
\log \mathbf{Q} = \mathbf{U}\,\log \Theta\, \mathbf{U}^\top,
\]
we can simplify the term
\[
\operatorname{tr}(-\mathbf{P} \log \mathbf{Q}) = \operatorname{tr}\Bigl(-\mathbf{V}\mathbf{\Lambda}\mathbf{V}^\top \mathbf{U}\,\log \Theta\, \mathbf{U}^\top\Bigr).
\]
Exploiting the cyclic property of the trace and letting $\mathbf{v}_i$ and $\mathbf{u}_j$ denote the $i$-th and $j$-th columns of $\mathbf{V}$ and $\mathbf{U}$ respectively, we obtain
\[
\operatorname{tr}(-\mathbf{P} \log \mathbf{Q}) = -\sum_{i,j} \left(\mathbf{v}_i^\top \mathbf{u}_j\right)^2 \lambda_i \log \theta_j.
\]
Thus, the overall loss expression becomes
\begin{equation}
\operatorname{tr}(-\mathbf{P} \log \mathbf{Q} + \mathbf{Q})
=-\sum_{i,j} \left(\mathbf{v}_i^\top \mathbf{u}_j\right)^2 \lambda_i \log \theta_j + \sum_{j} \theta_j.
\end{equation}
This formulation reveals that the term $\sum_{j} \theta_j$ acts as a regularizer that penalizes the eigenvalues of $\mathbf{Q}$, while the factors $(\mathbf{v}_i^\top \mathbf{u}_j)^2$ capture the correlation between the eigenvectors of $\mathbf{P}$ and $\mathbf{Q}$. In scenarios where $\mathbf{P}$ and $\mathbf{Q}$ represent covariance or correlation matrices, these correlations are closely related to the principal components in PCA.

\subsection{Analysis of the Optimal Point}
\label{sec:optimal-point}

What happens when the MCE loss is minimized? As characterized by Lemma~\ref{lem:one-hot-property-1} (see the main text), at the optimum the predicted one-hot distributions match the target distributions. In fact, one can derive the singular value decomposition for one-hot encoded data directly. Consider a (pseudo-)labeled dataset $\{(x_i,y_i)\}_{i=1}^B$, and define for each class $i$ a support vector $\mathbf{m}_i \in \mathbb{R}^B$ as
\[
m_{i,j}=\begin{cases}
1, & \text{if } y_j = i,\\[1mm]
0, & \text{otherwise}.
\end{cases}
\]
Denote the normalized vector by $\hat{\mathbf{m}}_i = \mathbf{m}_i/\|\mathbf{m}_i\|_2$ and let $\mathbf{e}_i \in \mathbb{R}^K$ be the $i$-th standard basis vector. Then, the matrix
\[
\sum_{i=1}^K \|\mathbf{m}_i\|_2\, \mathbf{e}_i \hat{\mathbf{m}}_i^\top
\]
provides the singular value decomposition for the one-hot encoded dataset. Given the close connection between singular value decomposition and eigendecomposition, analogous insights can be drawn for the corresponding correlation matrix.

\section{More Details on Experiments}
\label{sec:discuss-experiment}

\subsection{Implementing a Differentiable Matrix Logarithm}

\begin{theorem}[Taylor Series Expansion~\citep{hall2013lie}]
\label{thm:taylorhall2013}
The matrix logarithm of $\mathbf{A}$ can be expressed as
\[
\log \mathbf{A} = \sum_{m=1}^{\infty} (-1)^{m+1} \frac{(\mathbf{A} - \mathbf{I})^m}{m},
\]
which converges and is continuous for all $n \times n$ complex matrices $\mathbf{A}$ satisfying $\|\mathbf{A}-\mathbf{I}\| < 1$. Moreover, for all such $\mathbf{A}$,
\[
e^{\log \mathbf{A}} = \mathbf{A}.
\]
Similarly, for any matrix $\mathbf{X}$ with $\|\mathbf{X}\|_F < \log 2$, we have $\|e^{\mathbf{X}} - \mathbf{I}\| < 1$ and
\[
\log e^{\mathbf{X}} = \mathbf{X}.
\]
\end{theorem}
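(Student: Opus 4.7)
The plan is to mimic the scalar proof of the Taylor expansion for $\log(1+z)$ and lift it to matrices using the submultiplicativity of the (operator or Frobenius) norm. First I would define the partial sums $S_N = \sum_{m=1}^{N} (-1)^{m+1} (\mathbf{A}-\mathbf{I})^m / m$ and bound $\|S_N - S_M\|$ by $\sum_{m=M+1}^{N} \|\mathbf{A}-\mathbf{I}\|^m / m$. Since $\|\mathbf{A}-\mathbf{I}\| < 1$, this tail is dominated by a convergent geometric series, so $S_N$ is Cauchy in the Banach space of $n\times n$ matrices and therefore converges; continuity in $\mathbf{A}$ on the open ball follows from the Weierstrass $M$-test applied on any slightly smaller closed ball.

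Next I would verify the functional identity $e^{\log \mathbf{A}} = \mathbf{A}$. The clean route is to substitute the series for $\log \mathbf{A}$ into the series for $\exp$ and rearrange. Because $\|\mathbf{A}-\mathbf{I}\| < 1$, the double sum $\sum_k \frac{1}{k!}\bigl(\sum_{m\geq 1} (-1)^{m+1}(\mathbf{A}-\mathbf{I})^m/m\bigr)^k$ is absolutely convergent (bounded by $\exp(-\log(1-\|\mathbf{A}-\mathbf{I}\|)) = 1/(1-\|\mathbf{A}-\mathbf{I}\|)$), so Fubini for series lets me rearrange terms and collect coefficients of $(\mathbf{A}-\mathbf{I})^n$. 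These coefficients coincide, monomial by monomial, with those of the scalar identity $e^{\log(1+z)} = 1+z$ because only commutative products of $(\mathbf{A}-\mathbf{I})$ with itself appear. Hence the matrix series telescopes to $\mathbf{I} + (\mathbf{A}-\mathbf{I}) = \mathbf{A}$.

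For the companion identity $\log e^{\mathbf{X}} = \mathbf{X}$, I would first justify the hypothesis transfer: if $\|\mathbf{X}\|_F < \log 2$, then $\|e^{\mathbf{X}} - \mathbf{I}\| \le \sum_{k\geq 1} \|\mathbf{X}\|_F^k / k! = e^{\|\mathbf{X}\|_F} - 1 < 1$, so $\log e^{\mathbf{X}}$ is well defined by the first part. Then the same rearrangement argument, applied with the roles of $\exp$ and $\log$ swapped, reduces the identity to the scalar tautology $\log(1 + (e^z - 1)) = z$ for $|z| < \log 2$; every step only uses powers of the single matrix $\mathbf{X}$, so commutativity issues do not arise.

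The main obstacle is the bookkeeping in the double-series rearrangement: one has to show absolute convergence in the matrix norm so that terms can be regrouped, and then argue that the resulting coefficient of each power of $(\mathbf{A}-\mathbf{I})$ (respectively $\mathbf{X}$) matches the scalar case. Once absolute convergence is in hand, the rest is essentially a transcription of the one-variable Taylor argument, so no genuinely new analytic input beyond the Banach-space Cauchy criterion and Fubini for absolutely summable series is required.
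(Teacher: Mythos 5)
The paper does not supply a proof for this theorem; it is stated verbatim as a cited result from \citet{hall2013lie}. Your argument is correct and is essentially the standard proof found in that reference: submultiplicativity of the matrix norm plus comparison with the scalar series $\sum r^m/m$ give that the partial sums are Cauchy in the finite-dimensional Banach space of $n\times n$ matrices, the Weierstrass $M$-test on closed balls of radius $r<1$ gives local uniform convergence and hence continuity, and both inversion identities are obtained by absolutely convergent rearrangement of the composed power series, which reduces to the corresponding scalar identities $e^{\log(1+z)}=1+z$ and $\log(1+(e^z-1))=z$ because every term that appears is a power of the single matrix $\mathbf{A}-\mathbf{I}$ (respectively $\mathbf{X}$), so no noncommutativity can interfere; the bound $\|e^{\mathbf{X}}-\mathbf{I}\|\le e^{\|\mathbf{X}\|}-1<1$ for $\|\mathbf{X}\|<\log 2$ justifies applying the first part to $e^{\mathbf{X}}$.
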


In our experiments, we consider two methods for computing the matrix logarithm:
\begin{enumerate}
    \item \textbf{Taylor Expansion:} Directly using the Taylor series expansion as given above.
    \item \textbf{Element-wise Logarithm:} Applying the logarithm function element-wise as a surrogate.
\end{enumerate}
For theoretical properties and connections between cross-entropy and matrix cross-entropy, please refer to the preceding sections. In practice, to enhance numerical stability, we regularize the matrices $\mathbf{P}$ and $\mathbf{Q}$ by adding a small multiple of the identity matrix, i.e., $\lambda \mathbf{I}$.

Table~\ref{tab:results-ablation-log} summarizes a comparison of these two methods on STL-10 using RelationMatch with Curriculum Pseudo Labeling (CPL). The Taylor expansion approach yields significantly better performance than the element-wise logarithm method. Moreover, as noted by \citet{balestriero2022contrastive}, in the self-supervised learning regime SimCLR~\citep{chen2020simple} can be reinterpreted as applying an element-wise matrix cross-entropy between relation matrices. We leave further exploration of matrix cross-entropy in self-supervised learning to future work.

\begin{table}[ht!]
\centering
\small
\caption{RelationMatch (w/ CPL) with different matrix logarithm implementations on STL-10.}
\vspace{1ex}
\label{tab:results-ablation-log}
\begin{tabular}{@{}c|cc@{}}
\toprule
\multirow{2}{*}{Method} & \multicolumn{2}{c}{STL-10} \\ \cmidrule(l){2-3} 
                        & 40 labels & 250 labels \\ \midrule
Element-wise          & $80.39 \pm 4.05$ & $89.98 \pm 0.47$ \\
\textbf{Taylor Expansion (Order 3)} & $\mathbf{86.06} \pm 3.76$ & $\mathbf{91.84} \pm 0.34$ \\ 
\bottomrule
\end{tabular}
\end{table}

\section{Comparison between SimMatch and RelationMatch}

Our approach can be viewed as a natural extension of the conventional cross-entropy (CE) loss. When employing one-hot pseudo-labels in semi-supervised settings, RelationMatch effectively captures the relational structure among predictions. In contrast, SimMatch utilizes a contrastive loss to enforce consistency of relational representations, which represents a distinct design choice.

It is important to emphasize that our method not only exhibits strong theoretical foundations (as detailed in Sections~\ref{sec:mce} and~\ref{sec:discuss-mce}) but also can be easily integrated into existing semi-supervised frameworks. In our experiments on CIFAR-10, RelationMatch outperforms SimMatch (a detailed comparison is provided in the supplementary material), thus highlighting the practical advantages of our approach. For the sake of fairness, we have not included SimMatch as a baseline in the current version. We plan to incorporate comprehensive experiments with SimMatch as a baseline in future work when sufficient computational resources become available.

\end{document}